\documentclass[manuscript,screen,acmsmall]{acmart}

\usepackage{bbding} 
\usepackage[ruled,vlined,shortend,linesnumbered]{algorithm2e} 
\usepackage{enumitem}
\newcommand{\method}{\textsc{Midas}}
\newcommand{\sedanspot}{\textsc{SedanSpot}}
\usepackage{algpseudocode}

\usepackage{numprint} 
\npthousandsep{,} 

\usepackage{hyperref}

\newtheorem{problem}{Problem}
\newtheorem{definition}{Definition}

\newtheorem{theorem}{Theorem}


\AtBeginDocument{%
	\providecommand\BibTeX{{%
		\normalfont B\kern-0.5em{\scshape i\kern-0.25em b}\kern-0.8em\TeX}}}


\setcopyright{acmcopyright}
\acmJournal{TKDD}
\acmYear{2022} \acmVolume{16} \acmNumber{4} \acmArticle{75} \acmMonth{1} \acmPrice{15.00}\acmDOI{10.1145/3494564}







\begin{document}

	\title{Real-Time Anomaly Detection in Edge Streams}


	\author{Siddharth Bhatia}
	\affiliation{%
		\institution{National University of Singapore}
		\country{Singapore}
	}
	\email{siddharth@comp.nus.edu.sg}

	\author{Rui Liu}
	\affiliation{%
		\institution{National University of Singapore}
		\country{Singapore}
	}
	\email{xxliuruiabc@gmail.com}

	\author{Bryan Hooi}
	\affiliation{%
		\institution{National University of Singapore}
		\country{Singapore}
	}
	\email{bhooi@comp.nus.edu.sg}

	\author{Minji Yoon}
	\affiliation{%
		\institution{Carnegie Mellon University}
		\country{United States}
	}
	\email{minjiy@cs.cmu.edu}

	\author{Kijung Shin}
	\affiliation{%
		\institution{KAIST}
		\country{United States}
	}
	\email{kijungs@kaist.ac.kr}

	\author{Christos Faloutsos}
	\affiliation{%
		\institution{Carnegie Mellon University}
		\country{United States}
	}
	\email{christos@cs.cmu.edu}

	\renewcommand{\shortauthors}{S. Bhatia, et al.}

	\begin{abstract}
		Given a stream of graph edges from a dynamic graph, how can we assign anomaly scores to edges in an online manner, for the purpose of detecting unusual behavior, using constant time and memory? Existing approaches aim to detect \emph{individually surprising} edges.

		In this work, we propose \method, which focuses on detecting \emph{microcluster anomalies}, or suddenly arriving groups of suspiciously similar edges, such as lockstep behavior, including denial of service attacks in network traffic data. We further propose \method-F, to solve the problem by which anomalies are incorporated into the algorithm's internal states, creating a `poisoning' effect that can allow future anomalies to slip through undetected. \method-F introduces two modifications: 1) We modify the anomaly scoring function, aiming to reduce the `poisoning' effect of newly arriving edges; 2) We introduce a conditional merge step, which updates the algorithm's data structures after each time tick, but only if the anomaly score is below a threshold value, also to reduce the `poisoning' effect. Experiments show that \method-F has significantly higher accuracy than \method.

		In general, the algorithms proposed in this work have the following properties: (a) they detects microcluster anomalies while providing theoretical guarantees about the false positive probability; (b) they are online, thus processing each edge in constant time and constant memory, and also processes the data orders-of-magnitude faster than state-of-the-art approaches; (c) they provides up to $62\%$ higher ROC-AUC than state-of-the-art approaches.
	\end{abstract}

\begin{CCSXML}
<ccs2012>
   <concept>
       <concept_id>10010147.10010257.10010258.10010260.10010229</concept_id>
       <concept_desc>Computing methodologies~Anomaly detection</concept_desc>
       <concept_significance>500</concept_significance>
       </concept>
   <concept>
       <concept_id>10002978.10002997.10002999</concept_id>
       <concept_desc>Security and privacy~Intrusion detection systems</concept_desc>
       <concept_significance>300</concept_significance>
       </concept>
 </ccs2012>
\end{CCSXML}
\ccsdesc[500]{Computing methodologies~Anomaly detection}
\ccsdesc[300]{Security and privacy~Intrusion detection systems}

	\keywords{Anomaly Detection, Streaming, Real-time, Dynamic Graphs, Edge Streams, Microcluster}

	\maketitle

	\section{Introduction}
	Anomaly Detection in graphs is a critical problem for finding suspicious behavior in innumerable systems, such as intrusion detection, fake ratings, and financial fraud. This has been a well-researched problem with majority of the proposed approaches \cite{akoglu2010oddball,chakrabarti2004autopart,hooi2017graph,jiang2016catching,kleinberg1999authoritative,shin2018patterns} focusing on static graphs. However, many real-world graphs are dynamic in nature, and methods based on static connections may miss temporal characteristics of the graphs and anomalies.

	Among the methods focusing on dynamic graphs, most of them have edges aggregated into graph snapshots \cite{eswaran2018spotlight,sun2006beyond,sun2007graphscope,koutra2013deltacon,Sricharan,Gupta}. However, to minimize the effect of malicious activities and start recovery as soon as possible, we need to detect anomalies in real-time or near real-time i.e. to identify whether an incoming edge is anomalous or not, as soon as we receive it. In addition, since the number of vertices can increase as we process the stream of edges, we need an algorithm that uses constant memory in the graph size.

	Moreover, fraudulent or anomalous events in many applications occur in microclusters, suddenly arriving groups of suspiciously similar edges, e.g., denial of service attacks in network traffic data and lockstep behavior.
	However, existing methods that process edge streams in an online manner, including \cite{eswaran2018sedanspot,ranshous2016scalable}, aim to detect individually surprising edges, not microclusters, and can thus miss large amounts of suspicious activity. It is worth noting that in other literature, microcluster may have different meanings~\cite{aggarwal2010on,kranen2011the,bah2019an}, while we specifically refer to a group of sudden arriving edges.

	\begin{figure}[!htb]
		\center{\includegraphics[width=0.7\columnwidth]{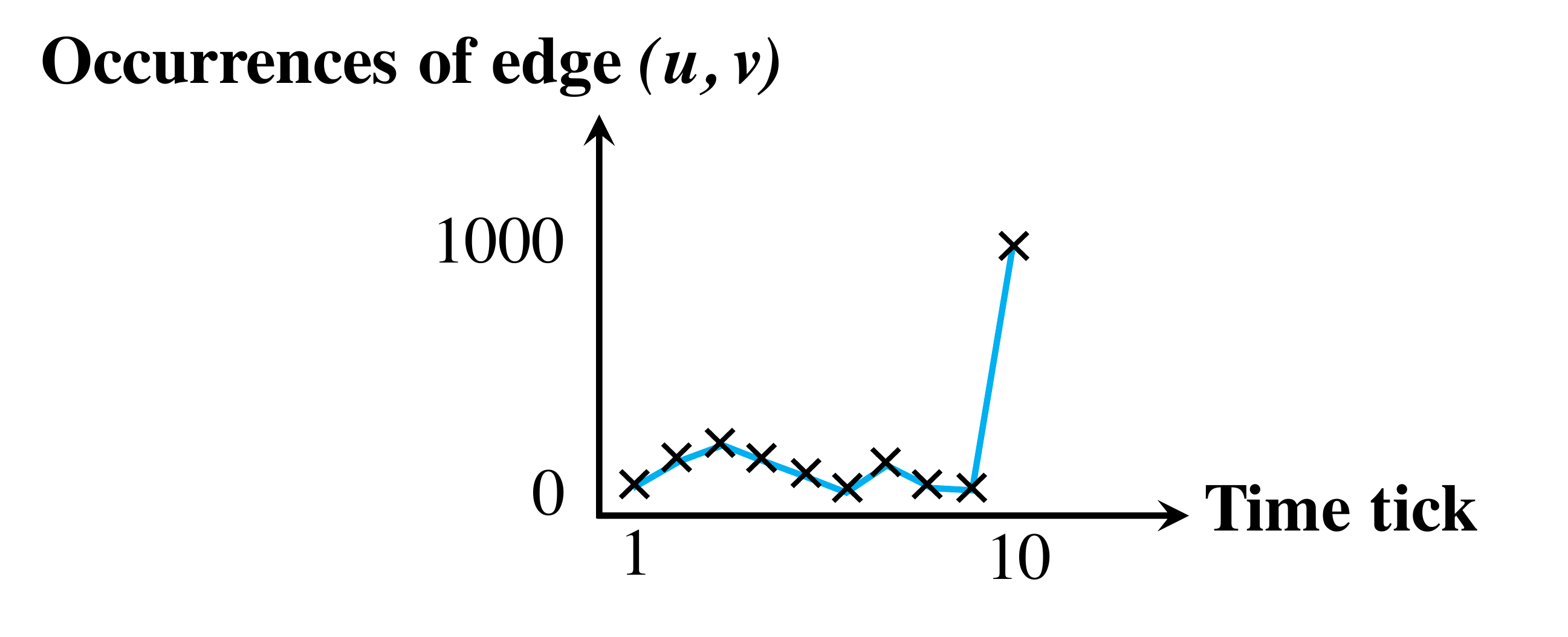}}
		\caption{\label{fig:intro} Time series of a single source-destination pair $(u,v)$, with a large burst of activity at time tick $10$.}
	\end{figure}

	In this work, we propose \method{}, which detect \emph{microcluster anomalies}, or suddenly arriving groups of suspiciously similar edges, in edge streams. Consider the example in Figure \ref{fig:intro} of a single source-destination pair $(u,v)$, which shows a large burst of activity at time $10$. This burst is the simplest example of a microcluster, as it consists of a large group of edges that are very similar to one another. The \method{} algorithm uses count-min sketches (CMS) \cite{cormode2005improved} to count the number of occurrences in each timestamp, then use the chi-squared test to evaluate the degree of deviation and produce a score representing the anomalousness. The higher the score, the more anomalous the edge is. The proposed method uses constant memory and has a constant time complexity processing each edge. Additionally, by using a principled hypothesis testing framework, \method{} provides theoretical bounds on the false positive probability, which those methods do not provide.

	We then propose a relational variant \method-R, which incorporates temporal and spatial relations. In the base version of the \method{} algorithm, the CMS is cleared after every timestamp change. However, some anomalies persist for multiple timestamps. Maintaining partial counts of previous timestamps to the next allows the algorithm to quickly produce a high score when the edge occurs again. This variant also considers the source and destination nodes as additional information that helps determine anomalous edges.

	We also notice that in the base version, edges are merged into the CMS without considering whether the edge is anomalous. However, those anomalous edges will increase the historical mean level of counts, poisoning the scores of future edges. Therefore, we propose the condition merge to prevent this. Also, we simplify the formula of the anomaly score, which grants more efficiency to the detector. Then, we proposed the filtering \method, or \method-F, integrating those changes, and further improves the accuracy of the algorithm.

	Our main contributions are as follows:
	\begin{enumerate}
		\item Streaming Microcluster Detection: We propose a novel streaming approach combining statistical (chi-squared test) and algorithmic (count-min sketch) ideas to detect microcluster anomalies, requiring constant time and memory.
		\item Theoretical Guarantees: In Theorem \ref{thm:bound}, we show guarantees on the false positive probability of \method.
		\item Effectiveness: Our experimental results show that \method{} outperforms baseline approaches by up to $62$\% higher ROC-AUC, and processes the data orders-of-magnitude faster than baseline approaches.
		\item Filtering Anomalies: We propose a variant, \method-F, that introduces two modifications that aim to filter away anomalous edges to prevent them from negatively affecting the algorithm's internal data structures.
	\end{enumerate}

	In the rest of this work, the related works in the area of anomaly detection will be briefly described in section~\ref{sec:RelatedWork}. The problem this work is trying to solve is given in section~\ref{sec:Problem}. The base MIDAS algorithm, the relational variant MIDAS-R and the theoretical guarantee are proposed or proved in section~\ref{sec:MIDAS}. The filtering variant MIDAS-F and its underlying concepts are given in section~\ref{sec:MIDAS-F}. The time and space complexity are analyzed in section~\ref{sec:Complexity}. Experiments and analysis on several real-world datasets are shown in section~\ref{sec:Experiment}.

	Reproducibility: Our code and datasets are publicly available at \url{\codeurl}.

	\section{Related Work}\label{sec:RelatedWork}
	Our work is closely related to areas like graph processing \cite{fang2020survey,yan2020g,semertzidis2019finding,he2015distributed}, streaming algorithms \cite{Bahri2018ASN,Mu2017StreamingCW,khan2018composite,rusu2009sketching,liakos2020rapid}, streaming graph analysis \cite{luo2020dynamic,nakamuramerlin,boniol2020series2graph}, and anomaly detection \cite{DBLP:conf/sdm/BogdanovFMPRS13,saebi2020efficient,noble2003graph,kulkarni2017network,bigdeli2018incremental}. In this section, we limit our review only to previous approaches detecting anomalous signs on static and dynamic graphs. See \cite{akoglu2015graph} for an extensive survey on graph-based anomaly detection.

	\noindent \textbf{Anomaly detection in static graphs} can be classified by which anomalous entities (nodes, edges, subgraph, etc.) are spotted.

	\begin{itemize}
		\item Anomalous node detection:
		\textsc{OddBall} \cite{akoglu2010oddball} extracts egonet-based features and finds empirical patterns with respect to the features.
		Then, it identifies nodes whose egonets deviate from the patterns, including the count of triangles, total weight, and principal eigenvalues.
		\textsc{CatchSync} \cite{jiang2016catching} computes node features, including degree and authoritativeness~\cite{kleinberg1999authoritative}, then spots nodes whose neighbors are notably close in the feature space.
		\item Anomalous subgraph detection:
		FRAUDAR \cite{hooi2017graph} and k-cores \cite{shin2018patterns} measure the anomalousness of nodes and edges, detecting a dense subgraph consisting of many anomalous nodes and edges.
		\item Anomalous edge detection:
		AutoPart \cite{chakrabarti2004autopart} encodes an input graph based on similar connectivity among nodes, then spots edges whose removal reduces the total encoding cost significantly.
		NrMF \cite{tong2011non} factorize the adjacency matrix and flag edges with high reconstruction error as outliers.
	\end{itemize}

	\noindent \textbf{Anomaly detection in graph streams} use as input a series of graph snapshots over time.
	We categorize them similarly according to the type of anomaly detected:

	\begin{itemize}
		\item Anomalous node detection:
		DTA/STA \cite{sun2006beyond} approximates the adjacency matrix of the current snapshot based on incremental matrix factorization, then spots nodes corresponding to rows with high reconstruction error. \cite{aggarwal2011outlier} dynamically partitions the network graph to construct a structural connectivity model and detect outliers in graph streams.
		\item Anomalous subgraph detection:
		Given a graph with timestamps on edges, \textsc{CopyCatch} \cite{beutel2013copycatch} spots near-bipartite cores where each node is connected to others in the same core densly within a short time.
		\item Anomalous event detection:
		\textsc{SpotLight} \cite{eswaran2018spotlight} detects sudden appearance of many unexpected edges, and \textsc{AnomRank} \cite{yoon2019fast} spots sudden changes in 1st and 2nd derivatives of PageRank.
	\end{itemize}

	\noindent \textbf{Anomaly detection in edge streams} use as input a stream of edges over time.
	Categorizing them according to the type of anomaly detected:

	\begin{itemize}
		\item Anomalous node detection:
		Given an edge stream, \textsc{HotSpot} \cite{yu2013anomalous} detects nodes whose egonets suddenly and significantly change.
		\item Anomalous subgraph detection:
		Given an edge stream, \textsc{DenseAlert} \cite{shin2017densealert} identifies dense subtensors created within a short time.
		\item Anomalous edge detection: Only the methods in this category are applicable to our task, as they operate on edge streams and output a score per edge.
		RHSS \cite{ranshous2016scalable} focuses on sparsely-connected parts of a graph but was evaluated in \cite{eswaran2018sedanspot} and was outperformed by \sedanspot \cite{eswaran2018sedanspot}.
		\sedanspot{} uses a customized PageRank to detect edge anomalies based on edge occurrence, preferential attachment, and mutual neighbors in sublinear space and constant time per edge.
		PENminer \cite{belth2020mining} explores the persistence of activity snippets, i.e., the length and regularity of edge-update sequences' reoccurrences.
		F-FADE \cite{chang2021f} aims to detect anomalous interaction patterns by factorizing the frequency of those patterns.
		These methods can effectively detect anomalies, but they require a considerable amount of time.

		We compare with \sedanspot, PENminer, and F-FADE, however, as shown in Table~\ref{tab:comparison}, neither method aims to detect microclusters, or provides guarantees on false positive probability.
	\end{itemize}

	\begin{table}[!htb]
		\centering
		\caption{Comparison of relevant edge stream anomaly detection approaches.}
		\label{tab:comparison}
		\begin{tabular}{@{}lccc|c@{}}
			\toprule
			& \rotatebox{90}{\sedanspot~\cite{eswaran2018sedanspot}}
			& \rotatebox{90}{PENminer~\cite{belth2020mining}}
			& \rotatebox{90}{F-FADE~\cite{chang2021f}}
			& {\bf \rotatebox{90}{\method}} \\ \midrule
			\textbf{Microcluster Detection} & & & & \CheckmarkBold \\
			\textbf{Guarantee on False Positive Probability} & & & & \CheckmarkBold \\
			\textbf{Constant Memory} & \Checkmark & & \Checkmark & \CheckmarkBold \\
			\textbf{Constant Update Time} & \Checkmark & \Checkmark & \Checkmark & \CheckmarkBold \\
			\bottomrule
		\end{tabular}
	\end{table}

	\section{Problem}\label{sec:Problem}

	Let $\mathcal{E} = \{e_1, e_2, \cdots\}$ be a stream of edges from a time-evolving graph $\mathcal{G}$.
	Without loss of generality, we assume the graph is a directed multigraph.
	Each arriving edge is a tuple $e_i = (u_i, v_i, t_i)$ consisting of a source node $u_i \in \mathcal{V}$, a destination node $v_i \in \mathcal{V}$, and a time of occurrence $t_i$, which is the time at which the edge was added to the graph.
	We do not assume the set of vertices $\mathcal{V}$ is known a priori.
	We do not assume the length of the edge stream $\mathcal{E}$ or the node set $\mathcal{V}$ is known as a priori.
	At any moment, the algorithm only knows its internal states and an incoming edge from the stream.

	As the algorithm primarily detects microclusters, we first give the definition of a microcluster.
	\begin{definition}
		Given an edge $e$, a detection period $T\ge 1$, and a threshold $\beta>1$.
		We say there is a microcluster if it satisfies

		\begin{equation}
			\dfrac{c(e,(n+1)T)}{c(e,nT)}>\beta \text{ or } \dfrac{c(e,(n+1)T)}{c(e,nT)}<\dfrac 1 \beta
		\end{equation}
		where $c(e,nT)$ is the occurrence count of $e$ within period $nT$.
	\end{definition}

	A typical value of $T$ is 1, that is, the algorithm detects if there is a burst between adjacent timestamps.
	$\beta$ is a user-defined parameter, it depends on the actual use cases and applications.
	A higher $\beta$ means the system is less sensitive to minor fluctuations.

	Next, we give the formal problem statement.

	\begin{problem}
		Given an evolving edge stream $\mathcal E$, return an anomaly score for each incoming edge $e$ according to the contextual information.
		A higher anomaly score indicates the edge is more suspicious to part of a microcluster, or a burst of edges.
	\end{problem}

	The desired properties of our algorithm are as follows:

	\begin{itemize}
		\item {\bf Microcluster Detection:} It should detect suddenly appearing bursts of activity that share many repeated nodes or edges, which we refer to as microclusters.
		\item {\bf Guarantees on False Positive Probability:} Given any user-specified probability level $\epsilon$ (e.g. $1\%$), the algorithm should be adjustable so as to provide a false positive probability of at most $\epsilon$ (e.g. by adjusting a threshold that depends on $\epsilon$). Moreover, while guarantees on the false positive probability rely on assumptions about the data distribution, we aim to make our assumptions as weak as possible.
		\item {\bf Constant Memory and Update Time:} For scalability in the streaming setting, the algorithm should run in constant memory and constant update time per newly arriving edge. Thus, its memory usage and update time should not grow with the length of the stream or the number of nodes in the graph.
	\end{itemize}

	\section{\method{} and \method-R Algorithms}\label{sec:MIDAS}

	\subsection{Overview}

	Next, we describe our \method{} and \method-R approaches. The following provides an overview:

	\begin{enumerate}
		\item {\bf Streaming Hypothesis Testing Approach:} We describe our \method{} algorithm, which uses streaming data structures within a hypothesis testing-based framework, allowing us to obtain guarantees on false positive probability.
		\item {\bf Detection and Guarantees:} We describe our decision procedure for determining whether a point is anomalous, and our guarantees on false positive probability.
		\item {\bf Incorporating Relations:} We extend our approach to the \method-R algorithm, which incorporates relationships between edges temporally and spatially\footnote{We use `spatially' in a graph sense, i.e. connecting nearby nodes, not to refer to any other continuous spatial dimension.}.
	\end{enumerate}

	\subsection{\method: Streaming Hypothesis Testing Approach}

	\subsubsection{Streaming Data Structures}

	In an offline setting, there are many time-series methods that could detect such bursts of activity. However, in an online setting, recall that we want memory usage to be bounded, so we cannot keep track of even a single such time series. Moreover, there are many such source-destination pairs, and the set of sources and destinations is not fixed a priori.

	To circumvent these problems, we maintain two types of count-min sketch (CMS)~\cite{cormode2005improved} data structures. Assume we are at a particular fixed time tick $t$ in the stream; we treat time as a discrete variable for simplicity. Let $s_{uv}$ be the total number of edges from $u$ to $v$ up to the current time. Then, we use a single CMS data structure to approximately maintain all such counts $s_{uv}$ (for all edges $uv$) in constant memory: at any time, we can query the data structure to obtain an approximate count $\hat{s}_{uv}$.

	Secondly, let $a_{uv}$ be the number of edges from $u$ to $v$ in the current time tick (but not including past time ticks). We keep track of $a_{uv}$ using a similar CMS data structure, the only difference being that we reset this CMS data structure every time we transition to the next time tick. Hence, this CMS data structure provides approximate counts $\hat{a}_{uv}$ for the number of edges from $u$ to $v$ in the current time tick $t$.

	\subsubsection{Hypothesis Testing Framework}

	Given approximate counts $\hat{s}_{uv}$ and $\hat{a}_{uv}$, how can we detect microclusters? Moreover, how can we do this in a principled framework that allows for theoretical guarantees?

	Fix a particular source and destination pair of nodes, $(u,v)$, as in Figure \ref{fig:intro}. One approach would be to assume that the time series in Figure \ref{fig:intro} follows a particular generative model: for example, a Gaussian distribution. We could then find the mean and standard deviation of this Gaussian distribution. Then, at time $t$, we could compute the Gaussian likelihood of the number of edge occurrences in the current time tick, and declare an anomaly if this likelihood is below a specified threshold.

	However, this requires a restrictive Gaussian assumption, which can lead to excessive false positives or negatives if the data follows a very different distribution. Instead, we use a weaker assumption: that the mean level (i.e. the average rate at which edges appear) in the current time tick (e.g. $t=10$) is the same as the mean level before the current time tick $(t<10)$. Note that this avoids assuming any particular distribution for each time tick, and also avoids a strict assumption of stationarity over time.

	Hence, we can divide the past edges into two classes: the current time tick $(t=10)$ and all past time ticks $(t<10)$. Recalling our previous notation, the number of events at $(t=10)$ is $a_{uv}$, while the number of edges in past time ticks $(t<10)$ is $s_{uv} - a_{uv}$.

	Under the chi-squared goodness-of-fit test, the chi-squared statistic is defined as the sum over categories of $\frac{(\text{observed} - \text{expected})^2}{\text{expected}}$. In this case, our categories are $t=10$ and $t<10$. Under our mean level assumption, since we have $s_{uv}$ total edges (for this source-destination pair), the expected number at $t=10$ is $\frac{s_{uv}}{t}$, and the expected number for $t<10$ is the remaining, i.e. $\frac{t-1}{t} s_{uv}$. Thus the chi-squared statistic is:

	\begin{align*}
		X^2 &= \frac{(\text{observed}_{(t=10)} - \text{expected}_{(t=10)})^2}{\text{expected}_{(t=10)}} \\
		&+ \frac{(\text{observed}_{(t<10)} - \text{expected}_{(t<10)})^2}{\text{expected}_{(t<10)}}\\
		&= \frac{(a_{uv} - \frac{s_{uv}}{t})^2}{\frac{s_{uv}}{t}} + \frac{((s_{uv} - a_{uv}) - \frac{t-1}{t} s_{uv})^2}{\frac{t-1}{t} s_{uv}}\\
		&= \frac{(a_{uv} - \frac{s_{uv}}{t})^2}{\frac{s_{uv}}{t}} + \frac{(a_{uv} - \frac{s_{uv}}{t})^2}{\frac{t-1}{t} s_{uv}}\\
		&= (a_{uv} - \frac{s_{uv}}{t})^2 \frac{t^2}{s_{uv}(t-1)}
	\end{align*}
	Note that both $a_{uv}$ and $s_{uv}$ can be estimated by our CMS data structures, obtaining approximations $\hat{a}_{uv}$ and $\hat{s}_{uv}$ respectively. This leads to our following anomaly score, using which we can evaluate a newly arriving edge with source-destination pair $(u,v)$:

	\begin{definition}[Anomaly Score]
		Given a newly arriving edge $(u,v,t)$, our anomaly score is computed as:
		\begin{align}
			\text{score}(u,v,t) = (\hat{a}_{uv} - \frac{\hat{s}_{uv}}{t})^2 \frac{t^2}{\hat{s}_{uv}(t-1)}
		\end{align}
	\end{definition}

	Algorithm \ref{alg:midas} summarizes our \method{} algorithm.

	\begin{algorithm}
		\caption{\method:\ Streaming Anomaly Scoring \label{alg:midas}}
		\KwIn{Stream of graph edges over time}
		\KwOut{Anomaly scores per edge}
		{\bf $\triangleright$ Initialize CMS data structures:} \\
		Initialize CMS for total count $s_{uv}$ and current count $a_{uv}$ \\
		\While{new edge $e=(u,v,t)$ is received:}{
				{\bf $\triangleright$ Update Counts:} \\
			Update CMS data structures for the new edge $uv$\\
			{\bf $\triangleright$ Query Counts:} \\
			Retrieve updated counts $\hat{s}_{uv}$ and $\hat{a}_{uv}$\\
			{\bf $\triangleright$ Anomaly Score:}\\
			{\bf output} $\text{score}((u,v,t)) = (\hat{a}_{uv} - \frac{\hat{s}_{uv}}{t})^2 \frac{t^2}{\hat{s}_{uv}(t-1)}$\\
		}
	\end{algorithm}

	\subsection{Detection and Guarantees}

	While Algorithm~\ref{alg:midas} computes an anomaly score for each edge, it does not provide a binary decision for whether an edge is anomalous or not.
	We want a decision procedure that provides binary decisions and a guarantee on the false positive probability: i.e. given a user-defined threshold $\epsilon$, the probability of a false positive should be at most $\epsilon$.
	Intuitively, the key idea is to combine the approximation guarantees of CMS data structures with properties of a chi-squared random variable.

	The key property of CMS data structures we use is that given any $\epsilon$ and $\nu$, for appropriately chosen CMS data structure sizes ($w = \lceil ln \frac{2}{\epsilon} \rceil, b = \lceil \frac{e}{\nu} \rceil$) \cite{cormode2005improved}, with probability at least $1-\frac{\epsilon}{2}$, the estimates $\hat{a}_{uv}$ satisfy:

	\begin{align}
		\hat{a}_{uv} \le a_{uv} + \nu \cdot N_t
	\end{align}

	where $N_t$ is the total number of edges in the CMS for $a_{uv}$ at time tick $t$.
	Since CMS data structures can only overestimate the true counts, we additionally have

	\begin{align}
		s_{uv} \le \hat{s}_{uv}
	\end{align}

	Define an adjusted version of our earlier score:

	\begin{align}
		\tilde{a}_{uv} = \hat{a}_{uv} - \nu N_t
	\end{align}

	To obtain its probabilistic guarantee, our decision procedure computes $\tilde{a_{uv}}$, and uses it to compute an adjusted version of our earlier statistic:

	\begin{align}
		\tilde{X^2} = (\tilde{a}_{uv} - \frac{\hat{s}_{uv}}{t})^2 \frac{t^2}{\hat{s}_{uv}(t-1)}
	\end{align}

	Note that the usage of $X^2$ and $\tilde{X^2}$ are different.
	$X^2$ is used as the score of individual edges while $\tilde{X^2}$ facilitates making binary decisions.

	Then our main guarantee is as follows:

	\begin{theorem}[False Positive Probability Bound]
		\label{thm:bound}
		Let $\chi_{1-\epsilon/2}^2(1)$ be the $1-\epsilon/2$ quantile of a chi-squared random variable with 1 degree of freedom.
		Then:
		\begin{align}
			P(\tilde{X^2} > \chi_{1-\epsilon/2}^2(1)) < \epsilon
		\end{align}
		In other words, using $\tilde{X^2}$ as our test statistic and threshold $\chi_{1-\epsilon/2}^2(1)$ results in a false positive probability of at most $\epsilon$.
	\end{theorem}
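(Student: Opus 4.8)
The plan is to control the false-positive event by splitting it across the two distinct sources of randomness — the count-min sketches on one hand, and the underlying edge counts under the null hypothesis on the other — and to budget $\epsilon/2$ to each, combining them with a union bound at the very end.

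First I would pin down the null distribution of the \emph{exact} statistic. Under the mean-level null hypothesis stated above (the arrival rate in the current tick equals the rate in the preceding ticks), the two-category chi-squared goodness-of-fit statistic built from the true counts,
\[
X^2 = \left(a_{uv} - \frac{s_{uv}}{t}\right)^2 \frac{t^2}{s_{uv}(t-1)},
\]
is a chi-squared variable with $2-1 = 1$ degree of freedom. By the definition of the quantile this gives $P\!\left(X^2 > \chi^2_{1-\epsilon/2}(1)\right) = \epsilon/2$ immediately, and requires no further work beyond invoking the standard goodness-of-fit result.

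The crucial step — and the one I expect to be the main obstacle — is to show that on the ``good'' sketch event $G = \{\hat a_{uv} \le a_{uv} + \nu N_t\}$, which has probability at least $1-\epsilon/2$ by the CMS guarantee, the adjusted statistic never exceeds the exact one, i.e. $\tilde X^2 \le X^2$. On $G$ the correction yields $\tilde a_{uv} = \hat a_{uv} - \nu N_t \le a_{uv}$, while $s_{uv} \le \hat s_{uv}$ holds unconditionally. Writing $\mu = s_{uv}/t$ and $\hat\mu = \hat s_{uv}/t$ so that $\mu \le \hat\mu$, the comparison runs cleanly in the burst direction the score is designed to flag, namely $\tilde a_{uv} \ge \hat\mu$: there $a_{uv} \ge \tilde a_{uv} \ge \hat\mu \ge \mu$, so $0 \le \tilde a_{uv} - \hat\mu \le a_{uv} - \mu$ forces $(\tilde a_{uv} - \hat\mu)^2 \le (a_{uv} - \mu)^2$, and together with $1/\hat\mu \le 1/\mu$ and the shared factor $t/(t-1)$ this delivers $\tilde X^2 \le X^2$. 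The delicate point is that this inequality is genuinely one-sided: pushing $\tilde a_{uv}$ below $\hat\mu$ could in principle \emph{inflate} $\tilde X^2$ relative to $X^2$, so the argument is not pointwise valid in both tails. I would therefore make explicit that subtracting $\nu N_t$ is a conservative correction precisely against over-counting, and that the detection procedure declares an anomaly only in the over-count (burst) regime, which is exactly where $\tilde X^2 \le X^2$ is guaranteed.

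Finally I would assemble the bound. Having established $\{\tilde X^2 > \chi^2_{1-\epsilon/2}(1)\} \cap G \subseteq \{X^2 > \chi^2_{1-\epsilon/2}(1)\} \cap G$ from the previous step, a union bound gives
\[
P\!\left(\tilde X^2 > \chi^2_{1-\epsilon/2}(1)\right) \le P(G^c) + P\!\left(X^2 > \chi^2_{1-\epsilon/2}(1)\right) \le \frac{\epsilon}{2} + \frac{\epsilon}{2} = \epsilon,
\]
and the strict inequality $<\epsilon$ claimed in the theorem follows because the two events overlap — the containment above is generically a proper inclusion — so the union bound is not tight. The whole argument hinges on the interaction between the CMS one-sided error and the monotonicity of the score in the burst tail; once that lemma is secured, the probabilistic bookkeeping is routine.
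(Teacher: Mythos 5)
Your proposal follows the same skeleton as the paper's proof: treat the exact statistic $X^2$ as chi-squared with one degree of freedom so that $P(X^2 > \chi_{1-\epsilon/2}^2(1)) = \epsilon/2$, invoke the CMS guarantee $P(\hat a_{uv} \le a_{uv} + \nu N_t) \ge 1-\epsilon/2$, show that on the good event the adjusted statistic is dominated by the exact one, and finish with a union bound. The difference lies in the domination step, and it is a difference that matters. The paper simply asserts
\begin{align*}
\left(\hat a_{uv} - \nu N_t - \frac{\hat s_{uv}}{t}\right)^2 \frac{t^2}{\hat s_{uv}(t-1)} \;\le\; \left(a_{uv} - \frac{s_{uv}}{t}\right)^2 \frac{t^2}{s_{uv}(t-1)}
\end{align*}
as if it followed unconditionally from $\tilde a_{uv} \le a_{uv}$ and $s_{uv} \le \hat s_{uv}$; it does not. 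Take $a_{uv} = s_{uv}/t$ (so the right-hand side is zero) with $\hat s_{uv}$ strictly overestimating $s_{uv}$: then $\tilde a_{uv} \le a_{uv} < \hat s_{uv}/t$ and the left-hand side is strictly positive. You correctly identified that the domination is genuinely one-sided --- it holds precisely when $\tilde a_{uv} \ge \hat s_{uv}/t$, via $0 \le \tilde a_{uv} - \hat s_{uv}/t \le a_{uv} - s_{uv}/t$ together with $1/\hat s_{uv} \le 1/s_{uv}$ --- and that the guarantee should therefore be read as applying to the one-sided (burst-direction) test, which is the regime the detector is designed to flag. So your proof is the rigorous version of the paper's argument: same decomposition, same budget of $\epsilon/2$ per source of randomness, but with the monotonicity lemma stated under its true hypothesis rather than asserted in general. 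Two caveats. First, the one-sided restriction is your addition, not something present in the theorem statement or the algorithm, so you should flag it explicitly as a (necessary) reinterpretation of the claim rather than a consequence of it. Second, your closing argument for the strict inequality (``the containment is generically a proper inclusion'') is not a proof --- the union bound honestly yields only $\le \epsilon$; this looseness is shared by the paper, whose own proof likewise concludes $< \epsilon$ from an event of probability at least $1-\epsilon$.
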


	\begin{proof}
		Recall that
		\begin{align}
			X^2 = (a_{uv} - \frac{s_{uv}}{t})^2 \frac{t^2}{s_{uv}(t-1)}
		\end{align}
		was defined so that it has a chi-squared distribution. Thus:
		\begin{align}
			\label{eq:cond1}
			P(X^2 \le \chi_{1-\epsilon/2}^2(1)) = 1-\epsilon/2
		\end{align}
		At the same time, by the CMS guarantees we have:
		\begin{align}
			\label{eq:cond2}
			P(\hat{a}_{uv} \le a_{uv} + \nu \cdot N_t) \ge 1-\epsilon/2
		\end{align}

		By union bound, with probability at least $1-\epsilon$, both these events \eqref{eq:cond1} and \eqref{eq:cond2} hold, in which case:

		\begin{align*}
			\tilde{X^2} &= (\tilde{a}_{uv} - \frac{\hat{s}_{uv}}{t})^2 \frac{t^2}{\hat{s}_{uv}(t-1)}\\
			& = (\hat{a}_{uv} - \nu \cdot N_t - \frac{\hat{s}_{uv}}{t})^2 \frac{t^2}{\hat{s}_{uv}(t-1)}\\
			& \le (a_{uv} - \frac{s_{uv}}{t})^2 \frac{t^2}{s_{uv}(t-1)}\\
			& = X^2 \le \chi_{1-\epsilon/2}^2(1)
		\end{align*}
		Finally, we conclude that
		\begin{align}
			P(\tilde{X^2} > \chi_{1-\epsilon/2}^2(1)) < \epsilon.
		\end{align}
	\end{proof}

	\subsection{Incorporating Relations}

	In this section, we describe our \method-R approach, which considers edges in a {\bf relational} manner: that is, it aims to group together edges that are nearby, either temporally or spatially.

	\textbf{Temporal Relations:} Rather than just counting edges in the same time tick (as we do in \method), we want to allow for some temporal flexibility: i.e. edges in the recent past should also count toward the current time tick, but modified by a reduced weight. A simple and efficient way to do this using our CMS data structures is as follows: at the end of every time tick, rather than resetting our CMS data structures for $a_{uv}$, we scale all its counts by a fixed fraction $\alpha \in (0, 1)$. This allows past edges to count toward the current time tick, with a diminishing weight. Note that we do not consider $0$ or $1$, because $0$ clears all previous values when the time tick changes and hence does not include any temporal effect; and $1$ does not scale the CMS data structures at all.

	\textbf{Spatial Relations:} We would like to catch large groups of spatially nearby edges: e.g. a single source IP address suddenly creating a large number of edges to many destinations, or a small group of nodes suddenly creating an abnormally large number of edges between them. A simple intuition we use is that in either of these two cases, we expect to observe {\bf nodes} with a sudden appearance of a large number of edges. Hence, we can use CMS data structures to keep track of edge counts like before, except counting all edges adjacent to any node $u$. Specifically, we create CMS counters $\hat{a}_u$ and $\hat{s}_u$ to approximate the current and total edge counts adjacent to node $u$. Given each incoming edge $(u,v)$, we can then compute three anomaly scores: one for edge $(u,v)$, as in our previous algorithm; one for source node $u$, and one for destination node $v$. Finally, we combine the three scores by taking their maximum value. Another possibility of aggregating the three scores is to take their sum and we discuss the performance of summing the scores in Section \ref{sec:exp}. Algorithm \ref{alg:midasr} summarizes the resulting \method-R algorithm.

	\begin{algorithm}
		\caption{\method-R:\ Incorporating Relations \label{alg:midasr}}
		\KwIn{Stream of graph edges over time}
		\KwOut{Anomaly scores per edge}
		{\bf $\triangleright$ Initialize CMS data structures:} \\
		Initialize CMS for total count $s_{uv}$ and current count $a_{uv}$ \\
		Initialize CMS for total count $s_{u}, s_{v}$ and current count $a_{u}, a_{v}$ \\
		\While{new edge $e=(u,v,t)$ is received:}{
				{\bf $\triangleright$ Update Counts:} \\
			Update CMS data structures for the new edge $uv$, source node $u$ and destination node $v$\\
			{\bf $\triangleright$ Query Counts:} \\
			Retrieve updated counts $\hat{s}_{uv}$ and $\hat{a}_{uv}$\\
			Retrieve updated counts $\hat{s}_u,\hat{s}_v,\hat{a}_{u},\hat{a}_{v}$\\
			{\bf $\triangleright$ Compute Edge Scores:}\\
			$\text{score}(u,v,t) = (\hat{a}_{uv} - \frac{\hat{s}_{uv}}{t})^2 \frac{t^2}{\hat{s}_{uv}(t-1)}$\\
			{\bf $\triangleright$ Compute Node Scores:}\\
			$\text{score}(u,t) = (\hat{a}_{u} - \frac{\hat{s}_{u}}{t})^2 \frac{t^2}{\hat{s}_{u}(t-1)}$\\
			$\text{score}(v,t) = (\hat{a}_{v} - \frac{\hat{s}_{v}}{t})^2 \frac{t^2}{\hat{s}_{v}(t-1)}$\\
			{\bf $\triangleright$ Final Scores:}\\
			$\textbf{output} \max\{ \text{score}(u,v,t), \text{score}(u,t), \text{score}(v,t) \}$
		}
	\end{algorithm}

	\section{\method-F: Filtering Anomalies}\label{sec:MIDAS-F}

	In \method{} and \method-R, in addition to being assigned an anomaly score, all normal and anomalous edges are also always recorded into the internal CMS data structures, regardless of their score. However, this inclusion of anomalous edges creates a `poisoning' effect which can allow future anomalies to slip through undetected.

	Let us consider a simplified case of a denial of service attack where a large number of edges arrive between two nodes within a short period of time. \method{} and \method-R analysis can be divided into three stages.

	In the first stage, when only a small number of such edges have been processed, the difference between the current count, $\hat{a}_{uv}$, and the expected count, $\frac{\hat{s}_{uv}}{t}$, is relatively small, so the anomaly score is low. This stage will not last long as the anomaly score will increase rapidly with the number of occurrences of anomalous edges.

	In the second stage, once the difference between these two counters becomes significant, the algorithm will return a high anomaly score for those suspicious edges.

	In the third stage, as the attack continues, i.e. anomalous edges continue to arrive, the expected count of the anomalous edge will increase. As a result, the anomaly score will gradually decrease, which can lead to false negatives, i.e. the anomalous edges being considered as normal edges, which is the `poisoning' effect due to the inclusion of anomalies in the CMS data structures.

	Therefore, to prevent these false negatives, we introduce the improved filtering \method{} (\method-F) algorithm. The following provides an overview:

	\begin{enumerate}
		\item {\bf Refined Scoring Function:} The new formula of the anomaly score only considers the information of the current time tick and uses the mean value of the previous time ticks as the expectation.

		\item {\bf Conditional Merge:} The current count $a$ for the source, destination and edge are no longer merged into the total count $s$ immediately. We determine whether they should be merged or not at the end of the time tick conditioned on the anomaly score.
	\end{enumerate}

	\subsection{Refined Scoring Function}

	During a time tick, while new edges continue to arrive, we only assign them a score, but do not directly incorporate them into our CMS data structures as soon as they arrive. This prevents anomalous edges from affecting the subsequent anomaly scores, which can possibly lead to false negatives. To solve this problem, we refine the scoring function to delay incorporating the edges to the end of the current time tick using a conditional merge as discussed in Section \ref{sec:ConditionalMerge}.

	As defined before, let $a_{uv}$ be the number of edges from $u$ to $v$ in the current time tick (but not including past time ticks). But unlike \method{} and \method-R, in \method-F, we define $s_{uv}$ to be the total number of edges from $u$ to $v$ up to the previous time tick, not including the current edge count $a_{uv}$. By not including the current edge count immediately, we prevent a high $a_{uv}$ from being merged into $s_{uv}$ so that the anomaly score for anomalous edges is not reduced.

	In the \method-F algorithm, we still follow the same assumption: that the mean level in the current time tick is the same as the mean level before the current time tick. However, instead of dividing the edges into two classes: past and current time ticks, we only consider the current time ticks. Similar to the chi-squared statistic of \cite{bhatia2020midas}, our statistic is as below.

	\begin{align*}
		X^2&=\frac{(\text{observed}-\text{expected})^2}{\text{expected}} \\
		&=\frac{\displaystyle\left(a_{uv}-\frac{s_{uv}}{t-1}\right)^2}{\displaystyle\frac{s_{uv}}{t-1}} \\
		&=\frac{\displaystyle\left[a_{uv}^2-\frac{2a_{uv}s_{uv}}{t-1}+\left(\frac{s_{uv}}{t-1}\right)^2\right](t-1)}{\displaystyle s_{uv}} \\
		&=\frac{\displaystyle a_{uv}^2(t-1)^2-2a_{uv}s_{uv}(t-1)+s_{uv}^2}{\displaystyle s_{uv}(t-1)} \\
		&=\frac{(a_{uv}+s_{uv}-a_{uv}t)^2}{s_{uv}(t-1)} \\
	\end{align*}

	Both $a_{uv}$ and $s_{uv}$ can be estimated by our CMS data structures, obtaining approximations $\hat{a}_{uv}$ and $\hat{s}_{uv}$ respectively. We will use this new score as the anomaly score for our \method-F algorithm.

	\begin{definition}[\method-F Anomaly Score]
		Given a newly arriving edge $(u,v,t)$, our anomaly score for this edge is computed as:
		\begin{equation}
			\label{eqn:FilteringCore.AnomalyScore}
			score(u,v,t)=\frac{(\hat{a}_{uv}+\hat{s}_{uv}-\hat{a}_{uv}t)^2}{\hat{s}_{uv}(t-1)}
		\end{equation}
	\end{definition}

	\subsection{Conditional Merge}\label{sec:ConditionalMerge}

	At the end of the current time tick, we decide whether to add $a_{uv}$ to $s_{uv}$ or not based on whether the edge $(u,v)$ appears normal or anomalous.

	We introduce $c_{uv}$ to keep track of the anomaly score. Whenever the time tick changes, if $c_{uv}$ is less than the pre-determined threshold $\theta$, then the corresponding $a_{uv}$ will be added to $s_{uv}$; otherwise, the expected count, i.e., $\frac{s_{uv}}{t-1}$ will be added to $s_{uv}$ to keep the mean level unchanged. We add $a_{uv}$ only when the cached score $c_{uv}$ is less than the pre-determined threshold $\theta$ to prevent anomalous instances of $a_{uv}$ from being added to the $s_{uv}$, which would reduce the anomaly score for an anomalous edge in the future time ticks.

	To store the latest anomaly score $c_{uv}$, we use a CMS-like data structure resembling the CMS data structure for $a$ and $s$ used in \method{} and \method-R. The only difference is that the updates to this data structure do not increment the existing occurrence counts, but instead override the previous values. In the remaining part of the paper, we refer to this CMS-like data structure as CMS for convenience.

	To efficiently merge the CMS data structure for $a$ into the CMS data structure for $s$, we need to know which buckets in the same hash functions across the multiple CMS data structures correspond to a particular edge. However, the algorithm does not store the original edges after processing. Therefore it is necessary that for each entity (edge, source, destination), the three CMS data structures for $a$, $s$, $c$ use the same layout and the same hash functions for each hash table so that the corresponding buckets refer to the same edge and we can do a bucket-wise merge. In practice, the nine CMS data structures can be categorized into three groups, corresponding to the edges, source nodes, and destination nodes, respectively. Only the three CMS data structures within the same group need to share the same structure.

	The conditional merge step is described in Algorithm~\ref{alg:FilteringCore.Merge}.

	\begin{algorithm}[!htb]
		\caption{\textsc{Merge}}
		\label{alg:FilteringCore.Merge}
		\KwIn{CMS for $s$, $a$, $c$, threshold $\theta$}
		\For{$\hat{s}$, $\hat{a}$, $\hat{c}$ from CMS buckets}{
			\If{$\hat{c}<\theta$}{
				$\hat{s} = \hat{s}+\hat{a}$ \\
			}\ElseIf{$t\ne 1$}{
				$\hat{s} = \hat{s}+\dfrac{\hat{s}}{t-1}$ // $\hat{s}$ is up-to-date until $t-1$ \\
			}
		}
	\end{algorithm}

	We also incorporate temporal and spatial relations as done in \method-R. For temporal relations, at the end of every time tick, rather than resetting our CMS data structures for $a_{uv}$, we scale all its counts by a fixed fraction $\alpha \in (0, 1)$. This allows past edges to count toward the current time tick, with a diminishing weight. For spatial relations, we use CMS data structures to keep track of the anomaly score of each edge like before, except considering all edges adjacent to any node $u$. Specifically, we create CMS counters $\hat{c}_u$ to keep track of the anomaly score for each node $u$ across all its neighbors. Given each incoming edge $(u,v)$, we can then compute three anomaly scores: one for edge $(u,v)$, as in \method\ and \method-R; one for source node $u$, and one for destination node $v$.

	Algorithm \ref{alg:FilteringCore.Algorithm} summarizes the resulting \method-F algorithm. It can be divided into two parts: 1) regular edge processing in lines $13$ to $24$, where we compute anomaly scores for each incoming edge and update the relevant counts, and 2) scaling and merging steps in lines $6$ to $12$, where at the end of each time tick, we scale the current counts by $\alpha$ and merge them into the total counts.

	\begin{algorithm}
		\caption{\method-F}
		\label{alg:FilteringCore.Algorithm}
		\KwIn{Stream of graph edges over time, threshold $\theta$}
		\KwOut{Anomaly scores per edge}
		{\bf $\triangleright$ Initialize CMS data structures:} \\
		Initialize CMS data structure for total count $s_{uv}$, current count $a_{uv}$, anomaly score $c_{uv}$ \\
		Initialize CMS data structure for total count $s_{u}$, current count $a_{u}$, anomaly score $c_{u}$ \\
		Initialize CMS data structure for total count $s_{v}$, current count $a_{v}$, anomaly score $c_{v}$ \\
		\While{new edge $e=(u,v,t)$ is received}{
			\If(// Time tick changes){$t\ne t_{internal}$}{
					{\bf $\triangleright$ Merge Counts:} \\
				\Call{Merge}{$\hat{s}_{uv}$, $\hat{a}_{uv}$, $\hat{c}_{uv}$, $\theta$} \\
				\Call{Merge}{$\hat{s}_u$, $\hat{a}_u$, $\hat{c}_u$, $\theta$} \\
				\Call{Merge}{$\hat{s}_v$, $\hat{a}_v$, $\hat{c}_v$, $\theta$} \\
				Scale CMS data structures for $a_{uv}$, $a_{u}$, $a_{v}$ by $\alpha$ \\
				$t_{internal} = t$ \\
			}
			{\bf $\triangleright$ Update Counts:} \\
			Update CMS data structure for $a$ for new edge $uv$ and nodes $u,v$ \\
			{\bf $\triangleright$ Query Counts:} \\
			Retrieve updated counts $\hat{s}_{uv}$ and $\hat{a}_{uv}$\\
			Retrieve updated counts $\hat{s}_u,\hat{s}_v,\hat{a}_{u},\hat{a}_{v}$\\
			{\bf $\triangleright$ Compute Scores:}\\
			$c_{uv} = \dfrac{(\hat{a}_{uv}+\hat{s}_{uv}-\hat{a}_{uv}t)^2}{\hat{s}_{uv}(t-1)}$ \\
			$c_u = \dfrac{(\hat{a}_u+\hat{s}_u-\hat{a}_{u}t)^2}{\hat{s}_u(t-1)}$ \\
			$c_v = \dfrac{(\hat{a}_v+\hat{s}_v-\hat{a}_{v}t)^2}{\hat{s}_v(t-1)}$ \\
			Update CMS data structure for $c$ for edge $uv$ and nodes $u,v$\\
			{\bf $\triangleright$ Final Scores:}\\
			$\textbf{output} \max\{ c_{uv},c_u,c_v\}$
		}
	\end{algorithm}

	\section{Time and Memory Complexity}\label{sec:Complexity}

	In terms of memory, \method, \method-R and \method-F only need to maintain the CMS data structures over time, which are proportional to $O(wb)$, where $w$ and $b$ are the number of hash functions and the number of buckets in the CMS data structures; which is bounded with respect to the data size.

	For time complexity, the only relevant steps in Algorithms \ref{alg:midas}, \ref{alg:midasr} and \ref{alg:FilteringCore.Algorithm} are those that either update or query the CMS data structures, which take $O(w)$ (all other operations run in constant time). Thus, time complexity per update step is $O(w)$.

	For \method{}-F, additionally, at the end of each time tick, $a$ is merged into $s$, as shown in Algorithm \ref{alg:FilteringCore.Merge}. At the end of each time tick, the algorithm needs to iterate over all hash functions and buckets. Thus, time complexity per time tick is $O(wb)$.

	\section{Experiments}\label{sec:Experiment}
	\label{sec:exp}

	In this section, we evaluate the performance of \method{}, \method-R and \method-F compared to \sedanspot{} on dynamic graphs. We aim to answer the following questions:

	\begin{enumerate}[label=\textbf{Q\arabic*.}]
		\item {\bf Accuracy:} How accurately does \method{} detect real-world anomalies compared to baselines, as evaluated using the ground truth labels? How will hyperparameters affect the accuracy?
		\item {\bf Scalability:} How does it scale with input stream length? How does the time needed to process each input compare to baseline approaches?
		\item {\bf Real-World Effectiveness:} Does it detect meaningful anomalies in case studies on \emph{Twitter} graphs?
	\end{enumerate}

	\textbf{Datasets:} \emph{DARPA}~\cite{lippmann1999results} is an intrusion detection dataset created in $1998$.
	It has $25K$ nodes, $4.5M$ edges, and $46K$ timestamps.
	The dataset records IP-IP connections from June $1$ to August $1$.
	Due to the relatively sparse time density, we use minutes as timestamps. \emph{CTU-13}~\cite{garcia2014empirical} is a botnet traffic dataset captured in the CTU University in $2011$.
	It consists of botnet samples from thirteen different scenarios.
	We mainly focus on those with denial of service attacks, i.e., scenario $4$, $10$, and $11$.
	The dataset includes $371K$ nodes, $2.5M$ edges, and $33K$ timestamps, where the resolution of timestamps is one second. \emph{UNSW-NB15}~\cite{moustafa2015unsw} is a hybrid of real normal activities and synthetic attack behaviors.
	The dataset contains only $50$ nodes, but has $2.5M$ records and $85K$ timestamps.
	Each timestamp in the dataset represents an interval of one second. \emph{TwitterSecurity}~\cite{rayana2016less} has $2.6M$ tweet samples for four months (May-Aug $2014$) containing Department of Homeland Security keywords related to terrorism or domestic security.
	Entity-entity co-mention temporal graphs are built on a daily basis. Ground truth contains the dates of major world incidents. \emph{TwitterWorldCup}~\cite{rayana2016less} has $1.7M$ tweet samples for the World Cup $2014$ season (June $12$-July $13$).
	The tweets are filtered by popular/official World Cup hashtags, such as \#worldcup, \#fifa, \#brazil, etc.
	Entity-entity co-mention temporal graphs are constructed on one hour sample rate.

	Note that we use different time tick resolutions for different datasets, demonstrating our algorithm is capable of processing datasets with various edge densities.

	\textbf{Baselines:}
	As described in the Related Work, we use \sedanspot \cite{eswaran2018sedanspot}, PENminer \cite{belth2020mining}, and F-FADE \cite{chang2021f} as our baselines.

	\textbf{Evaluation Metrics:}
	All the methods output an anomaly score per edge (higher is more anomalous).
	We report the area under the receiver operating characteristic curve (ROC-AUC, higher is better).

	\subsection{Experimental Setup}

	All experiments are carried out on a $2.4 GHz$ Intel Core $i9$ processor, $32 GB$ RAM, running OS $X$ $10.15.2$.
	We implement our algorithm in C++ and use the open-source implementations of \sedanspot, PENminer, and F-FADE provided by the authors, following parameter settings as suggested in the original papers.

	We use $2$ hash functions for the CMS data structures, and set the number of CMS buckets to $1024$ to result in an approximation error of $\nu=0.003$.
	For \method-R and \method-F, we set the temporal decay factor $\alpha$ as $0.5$.
	For \method-F, the default threshold $\theta$ is $1000$. We discuss the influence of $\alpha$ and the threshold $\theta$ in the following section. Unless otherwise specified, all experiments are repeated 21 times and the median performance (ROC-AUC, running time, etc.) is reported to minimize the influence of randomization in hashing. Also, note that the reported running time does not include I/O.

	\subsection{Accuracy}

	Table~\ref{tab:Experiment.AUROC} shows the ROC-AUC of \sedanspot, PENminer, F-FADE, \method, \method-R, and \method-F on the \emph{DARPA}, \emph{CTU-13}, and \emph{UNSW-NB15} datasets since only these three datasets have ground truth available for each edge. On \emph{DARPA}, compared to the baselines, \method\ algorithms increase the ROC-AUC by $6$\%-$53$\%, on \emph{CTU-13} by $13$\%-$62$\%, and on \emph{UNSW-NB15} by $12$\%-$30$\%.

	\begin{table*}[!htb]
		\centering
		\caption{ROC-AUC (standard deviation)}\label{tab:Experiment.AUROC}
		\begin{tabular}{lrrrlll}
			\toprule
			Dataset & PENminer & F-FADE & \sedanspot & \method & \method-R       & \method-F       \\
			\midrule
			\emph{DARPA}      & 0.8267   & 0.8451 & 0.6442     & 0.9042 (0.0032) & 0.9514 (0.0012)         & \textbf{0.9873} (0.0009) \\
			\emph{CTU-13}     & 0.6041   & 0.8028 & 0.6397     & 0.9079 (0.0049) & 0.9703 (0.0009)         & \textbf{0.9843} (0.0004) \\
			\emph{UNSW-NB15}  & 0.7028   & 0.6858 & 0.7575     & 0.8843 (0.0079) & \textbf{0.8952} (0.0028) & 0.8517 (0.0013)         \\
			\bottomrule
		\end{tabular}
	\end{table*}

	Figures \ref{fig:AUCdarpa}, \ref{fig:AUCctu}, and \ref{fig:AUCunsw}  plot the ROC-AUC vs. running time for the baselines and our methods on the \emph{DARPA}, \emph{CTU-13}, and \emph{UNSW-NB15} datasets respectively. Note that \method, \method-R, and \method-F achieve a much higher ROC-AUC compared to the baselines, while also running significantly faster.

	\begin{figure}[!ht]
		\center
		\includegraphics[width=0.8\columnwidth]{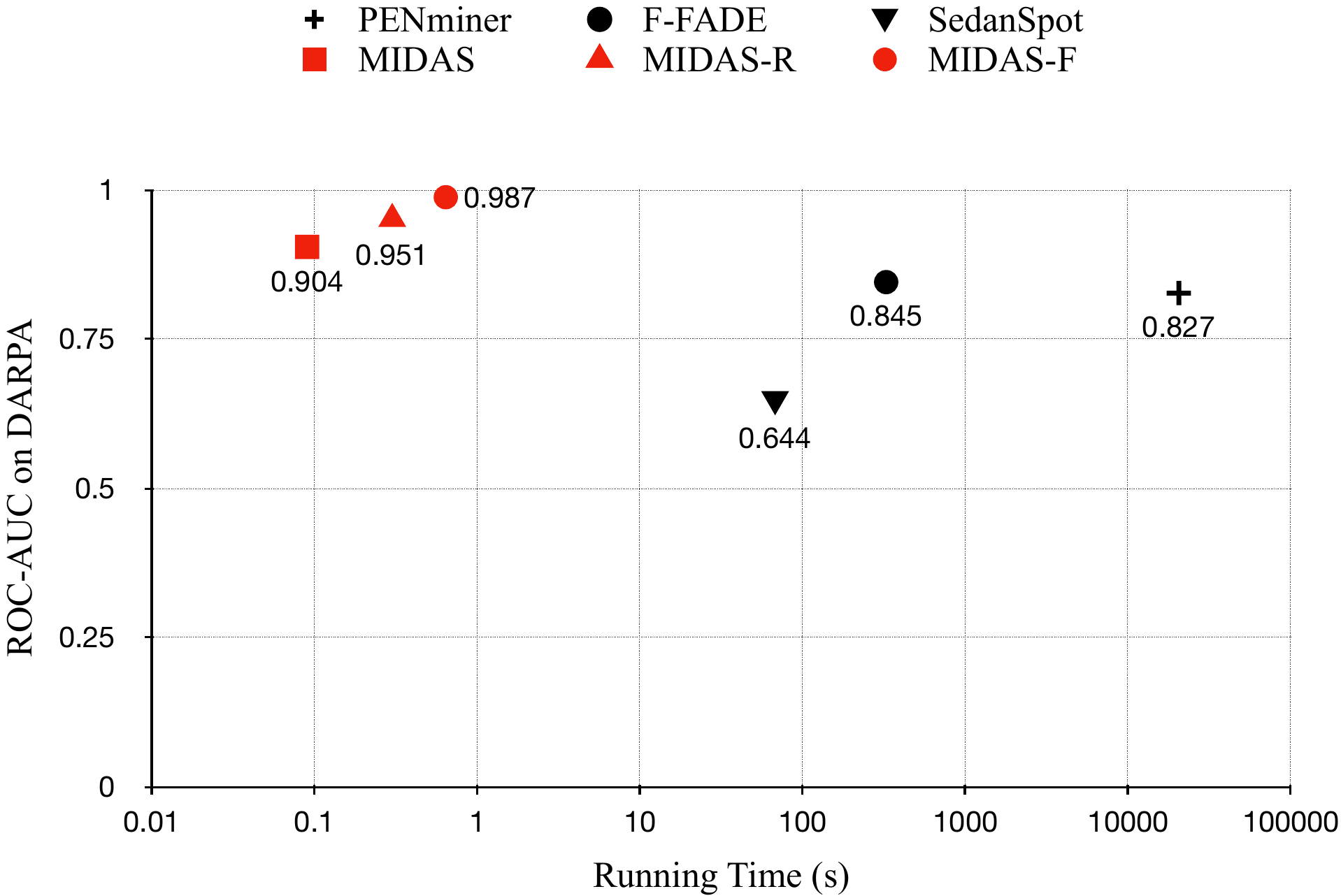}
		\caption{\label{fig:AUCdarpa} ROC-AUC vs. time on \emph{DARPA}}
	\end{figure}

	\begin{figure}[!ht]
		\center
		\includegraphics[width=0.8\columnwidth]{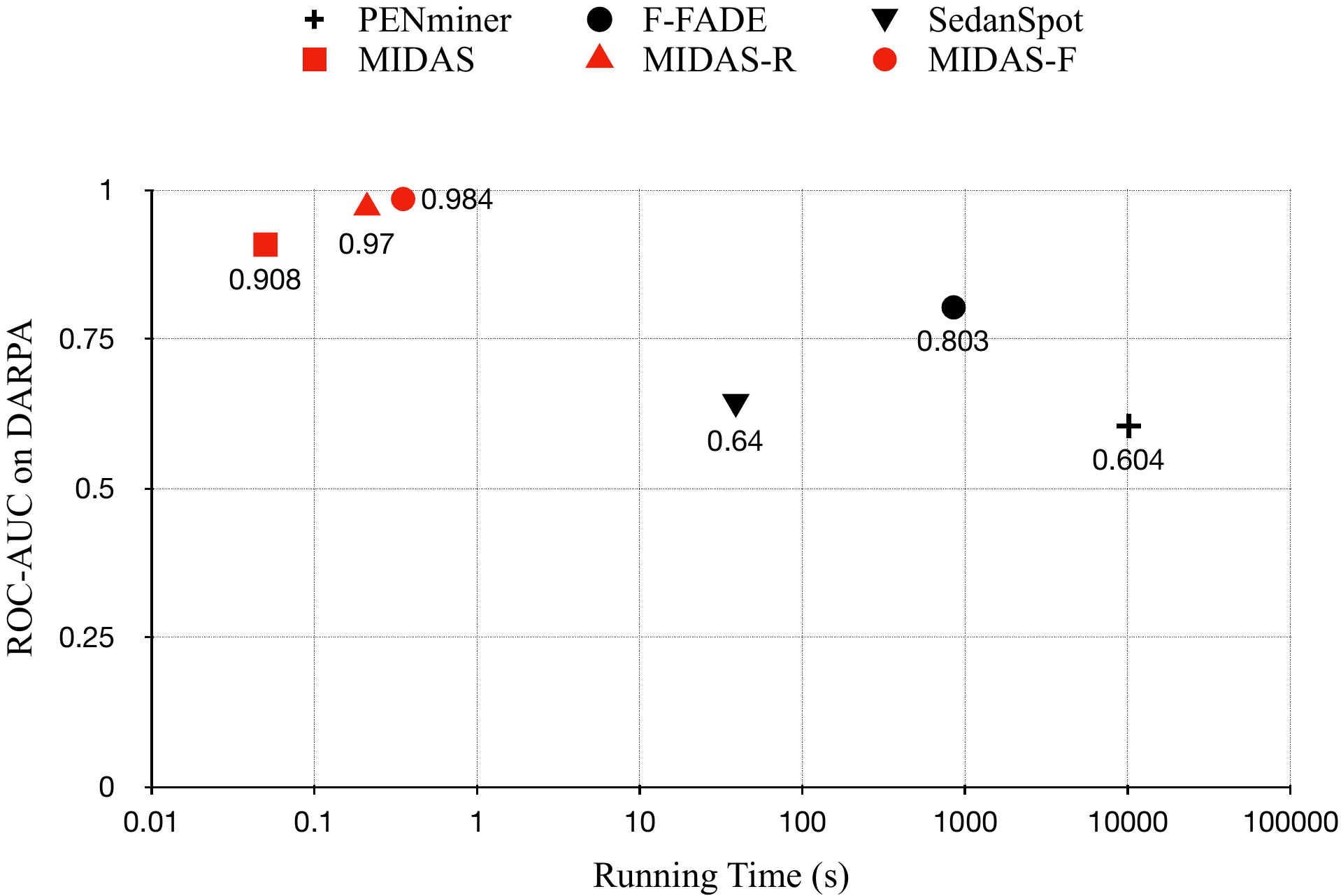}
		\caption{\label{fig:AUCctu} ROC-AUC vs. time on \emph{CTU-13}}
	\end{figure}

	\begin{figure}[!ht]
		\center
		\includegraphics[width=0.8\columnwidth]{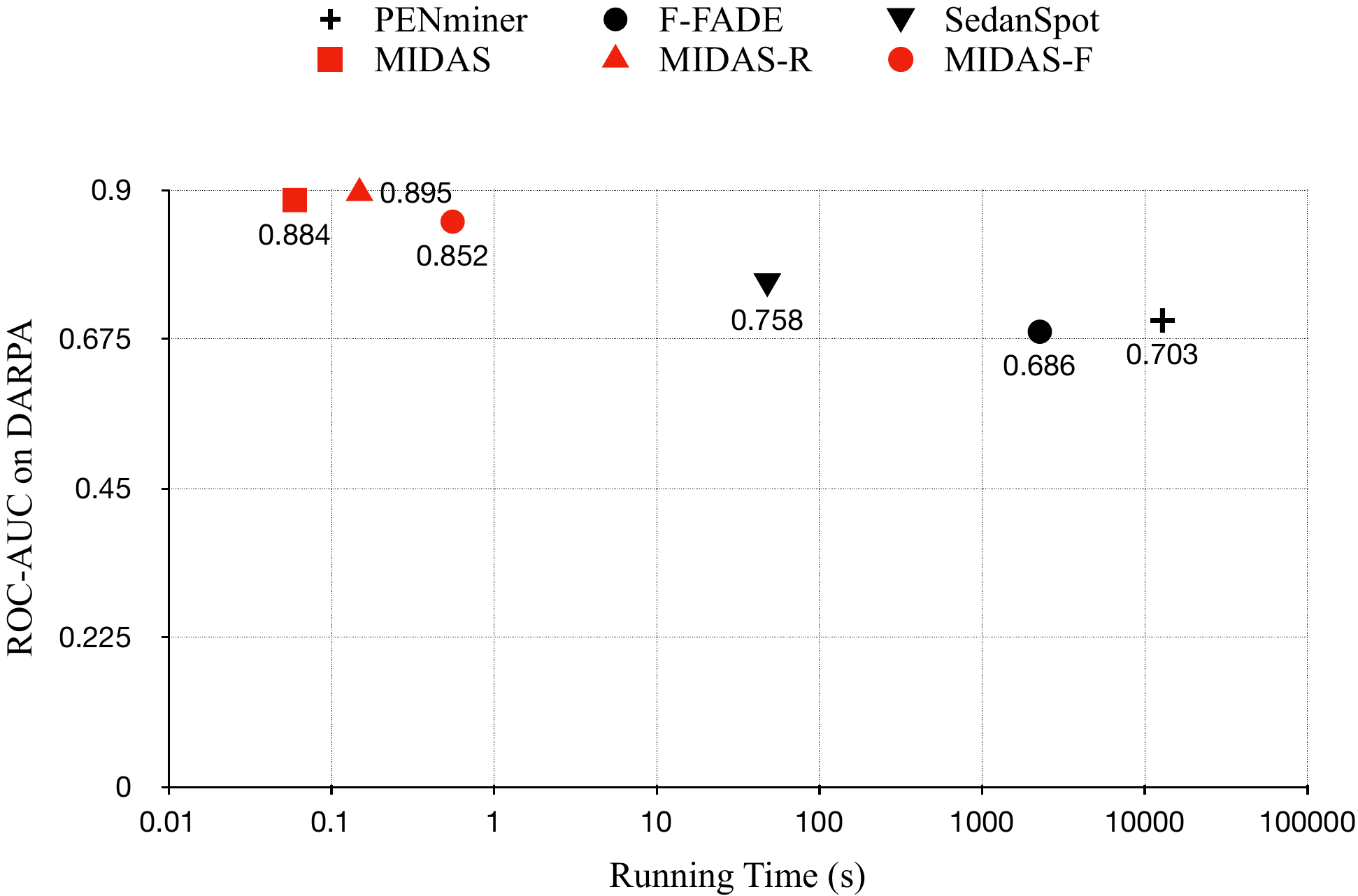}
		\caption{\label{fig:AUCunsw} ROC-AUC vs. time on \emph{UNSW-NB15}}
	\end{figure}

	Table~\ref{tab:FactorVsAUC} shows the influence of the temporal decay factor $\alpha$ on the ROC-AUC for \method-R and \method-F in the \emph{DARPA} dataset. Note that instead of scaling the values in the CMS, \method{} clears (or resets) values in the CMS data structure when the time tick changes; therefore, it is not included. We see that $\alpha=0.9$ gives the maximum ROC-AUC for \method-R ($0.9657$) and $\alpha=0.8$ for \method-F ($0.9876$).

	\begin{table}[!htb]
		\centering
		\caption{Influence of temporal decay factor $\alpha$ on the ROC-AUC in \method-R and \method-F}
		\label{tab:FactorVsAUC}
		\begin{tabular}{@{}lrr@{}}
			\toprule
			$\alpha$ & \method-R & \method-F \\
			\midrule
			$0.1$ & $0.9346$ & $0.9779$ \\
			$0.2$ & $0.9429$ & $0.9801$ \\
			$0.3$ & $0.9449$ & $0.9817$ \\
			$0.4$ & $0.9484$ & $0.9837$ \\
			$0.5$ & $0.9504$ & $0.9852$ \\
			$0.6$ & $0.9526$ & $0.9863$ \\
			$0.7$ & $0.9542$ & $0.9863$ \\
			$0.8$ & $0.9590$ & $0.9883$ \\
			$0.9$ & $0.9657$ & $0.9876$ \\
			\bottomrule
		\end{tabular}
	\end{table}

	Table~\ref{tab:ThresholdVsAUC} shows the influence of the threshold $\theta$ on the ROC-AUC for \method-F in the \emph{DARPA} dataset. If the threshold is too low, even normal edges can be rejected. On the other end, if the threshold is too high ($\theta=10^7$), very few anomalous edges will be rejected, and \method-F (ROC-AUC = $0.9572$) performs similar to \method-R (ROC-AUC = $0.95$). We see that $\theta=10^3$ achieves the maximum ROC-AUC of $0.9853$.

	\begin{table}[!htb]
		\centering
		\caption{Influence of threshold $\theta$ on the ROC-AUC in \method-F}
		\label{tab:ThresholdVsAUC}
		\begin{tabular}{@{}lr@{}}
			\toprule
			$\theta$ & ROC-AUC \\
			\midrule
			$10^0$ & $0.9838$ \\
			$10^1$ & $0.9840$ \\
			$10^2$ & $0.9839$ \\
			$10^3$ & $0.9853$ \\
			$10^4$ & $0.9807$ \\
			$10^5$ & $0.9625$ \\
			$10^6$ & $0.9597$ \\
			$10^7$ & $0.9572$ \\
			\bottomrule
		\end{tabular}
	\end{table}

	Table~\ref{tab:BucketsVsAUC} shows the ROC-AUC vs. number of buckets ($b$) in CMSs on the \emph{UNSW-NB15} dataset.
	We can observe the increase in the performance, which indicates that increasing the buckets helps alleviate the effect of conflicts, and further reduce the false positive rate of the resulting scores.
	Also, note that the ROC-AUC does not change after $10,000$ buckets, one possible reason is that the number of columns is sufficiently high to negate the influence of conflicts.
	This also simulates the ``no-CMS'' situation, i.e., the edge counts are maintained in an array of infinite size.

	\begin{table}[!htb]
		\centering
		\caption{Influence of the number of buckets on the ROC-AUC in \method, \method-R, and \method-F}
		\label{tab:BucketsVsAUC}
		\begin{tabular}{@{}lrrr@{}}
			\toprule
			$b$ & \method & \method-R & \method-F \\
			\midrule
			$10^2$ & $0.7978$ & $0.8161$ & $0.8653$ \\
			$10^3$ & $0.8732$ & $0.8418$ & $0.8863$ \\
			$10^4$ & $0.8842$ & $0.8517$ & $0.8952$ \\
			$10^5$ & $0.8842$ & $0.8517$ & $0.8952$ \\
			$10^6$ & $0.8842$ & $0.8517$ & $0.8952$ \\
			$10^7$ & $0.8842$ & $0.8517$ & $0.8952$ \\
			\bottomrule
		\end{tabular}
	\end{table}

	For \method{}-R and \method{}-F, we also test the effect of summing the three anomaly scores, one for the edge $(u,v)$, one for node $u$, and one for node $v$. The scores are not significantly different: with default parameters, the ROC-AUC is $0.95$ for \method-R (vs. $0.95$ using maximum) and $0.98$ for \method-F (vs. $0.99$ using maximum).

	\subsection{Scalability}

	Table~\ref{tab:times} shows the running time for the baselines and \method\ algorithms.	Compared to \sedanspot, on all the 5 datasets, \method\ speeds up by $623-800\times$, \method-R speeds up by $183-326\times$, and \method-F speeds up by $85-286\times$. Compared to F-FADE, on all the 5 datasets, \method\ speeds up by $806-37782\times$, \method-R speeds up by $366-15112\times$, and \method-F speeds up by $366-4047\times$. Compared to PENminer, on all the 5 datasets, \method\ speeds up by $101419-214282\times$, and \method-R speeds up by $46099-85712\times$, \method-F speeds up by $22958-47324\times$.

	\begin{table*}[!htb]
		\centering
		\caption{Running time for different datasets in seconds}
		\label{tab:times}
		\begin{tabular}{lrrrrrr}
			\toprule
			Dataset & PENminer & F-FADE & \sedanspot & \method & \method-R       & \method-F       \\
			\midrule
			\emph{DARPA}        & $20423$s & $325.1$s & $67.54$s   & $0.09$s & $0.30$s   & $0.64$s   \\
			\emph{CTU-13}       & $10065$s & $844.2$s & $38.73$s   & $0.05$s & $0.21$s   & $0.35$s   \\
			\emph{UNSW-NB15}      & $12857$s & $2267$s  & $48.03$s   & $0.06$s & $0.15$s   & $0.56$s   \\
			\emph{TwitterWorldCup}  & $3786$s  & $141.7$s & $22.92$s   & $0.03$s & $0.07$s   & $0.08$s   \\
			\emph{TwitterSecurity}  & $5071$s  & $40.34$s & $31.18$s   & $0.05$s & $0.11$s   & $0.11$s   \\
			\bottomrule
		\end{tabular}
	\end{table*}

	\sedanspot{} requires several subprocesses (hashing, random-walking, reordering, sampling, etc), resulting in a large computation time. For PENminer and F-FADE, while the python implementation is a factor, the algorithm procedures also negatively affect their running speed. PENminer requires active pattern exploration and F-FADE needs expensive factorization operations.
	For \method, the improvement of running speed is through both, the algorithm procedure as well as the implementation. The algorithm procedure is less complicated than baselines; for each edge, the only operations are updating CMSs (hashing) and computing scores, and both are within constant time complexity. The implementation is well optimized and utilizes techniques like auto-vectorization to boost execution efficiency.

	Figure~\ref{fig:scaling} shows the scalability of \method, \method-R, and \method-F algorithms. We plot the time required to process the first $2^{16}, 2^{17},\ldots,2^{22}$ edges of the \emph{DARPA} dataset. This confirms the linear scalability of \method{} algorithms with respect to the number of edges in the input dynamic graph due to its constant processing time per edge. Note that \method{}, \method{}-R and \method-F can process $4.5M$ edges within $1$ second, allowing real-time anomaly detection.

	\begin{figure}[!htb]
		\center
		\includegraphics[width=0.65\columnwidth]{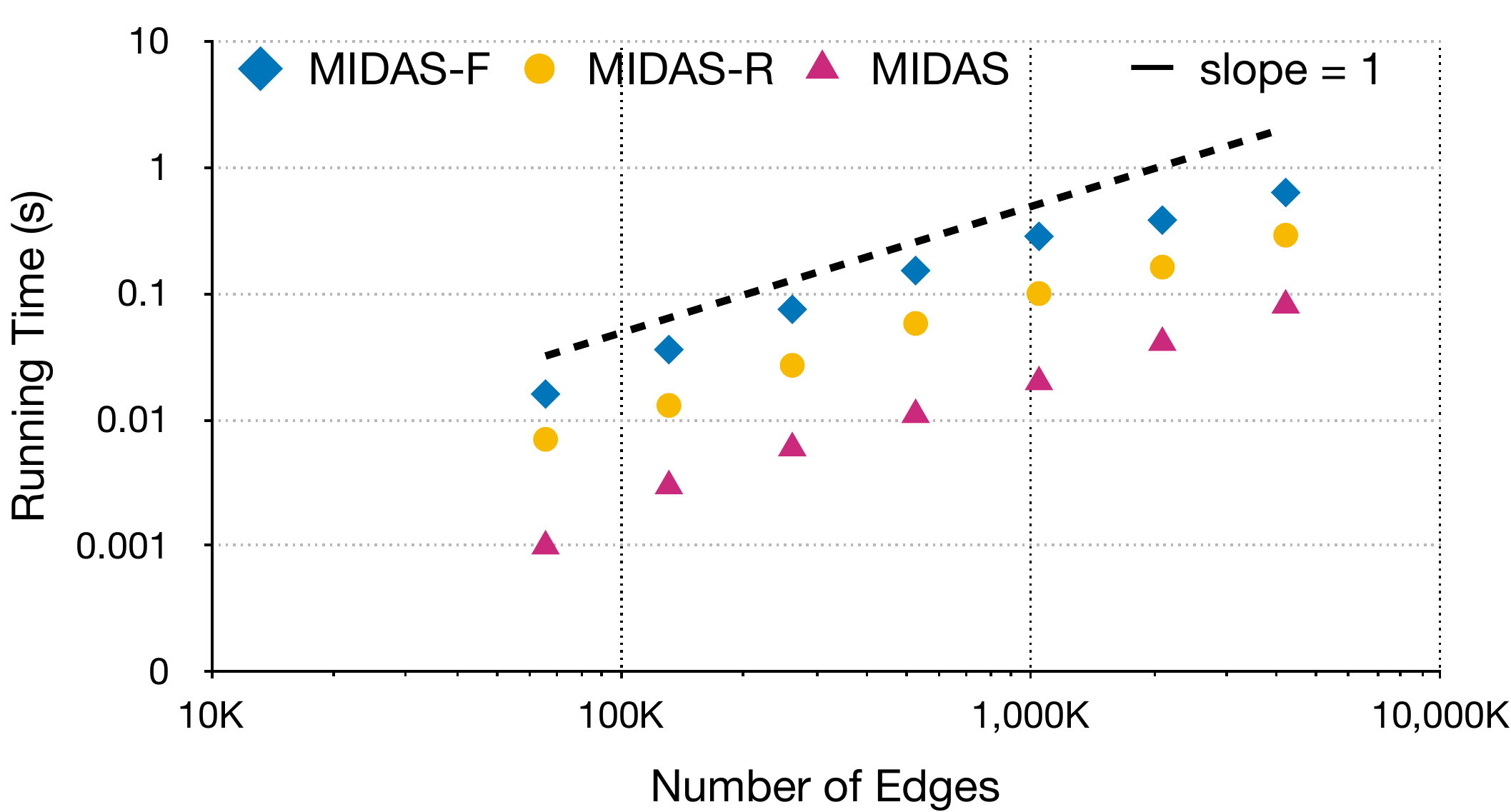}
		\caption{\label{fig:scaling} \method{}, \method-R and \method-F scale linearly with the number of edges in the input dynamic graph.}
	\end{figure}

	Figure~\ref{fig:frequency} plots the number of edges and the time to process each edge in the \emph{DARPA} dataset. Due to the limitation of clock accuracy, it is difficult to obtain the exact time of each edge. But we can approximately divide them into two categories, i.e., less than $1\mu s$ and greater than $1\mu s$. All three methods process majority of the edges within $1\mu s$.

	\begin{figure}[!htb]
		\center
		\includegraphics[width=0.65\columnwidth]{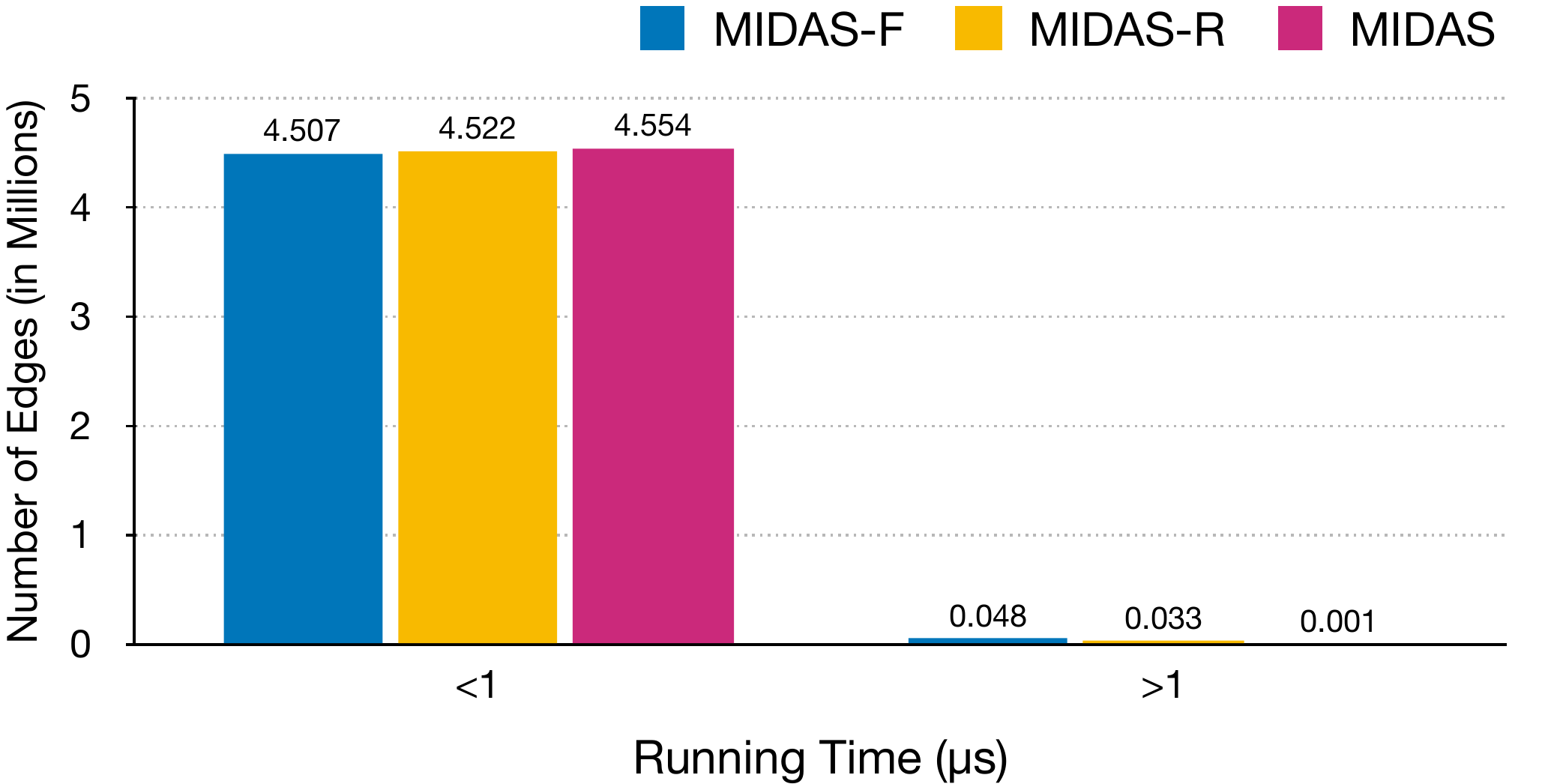}
		\caption{Distribution of processing times for $\sim4.5M$ edges of \emph{DARPA} dataset.}\label{fig:frequency}
	\end{figure}

	Figure~\ref{fig:threshold} shows the dependence of the running time on the threshold for \method-F. We observe that the general pattern is a line with slope close to $0$. Therefore, the time complexity does not depend on the threshold.

	\begin{figure}[!htb]
		\center{\includegraphics[width=0.65\columnwidth]{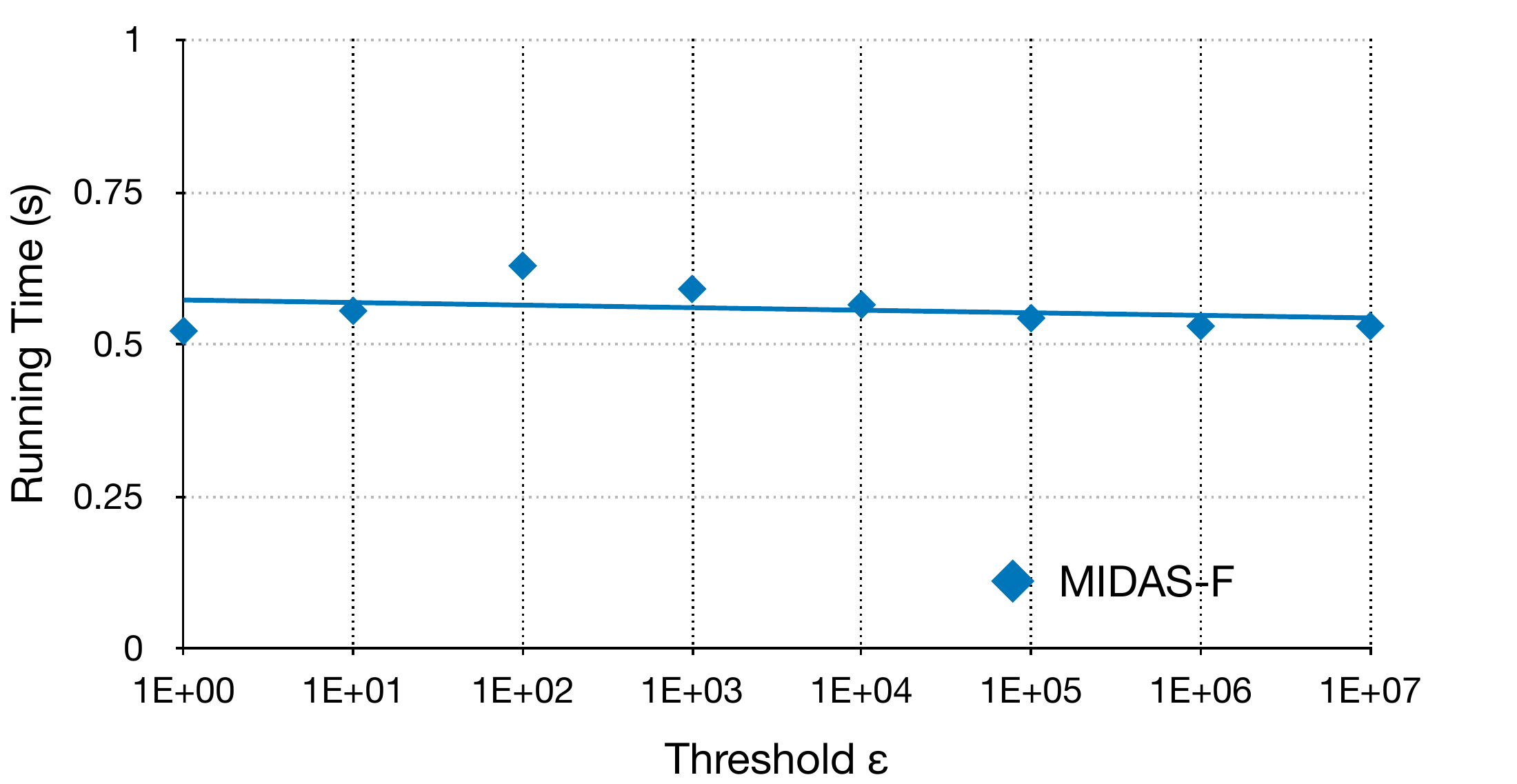}}
		\caption{\label{fig:threshold} Running time of \method-F does not depend on the threshold $\theta$}
	\end{figure}

	Figure~\ref{fig:hashfunctions} shows the dependence of the running time on the number of hash functions and linear scalability.

	\begin{figure}[!htb]
		\center{\includegraphics[width=0.65\columnwidth]{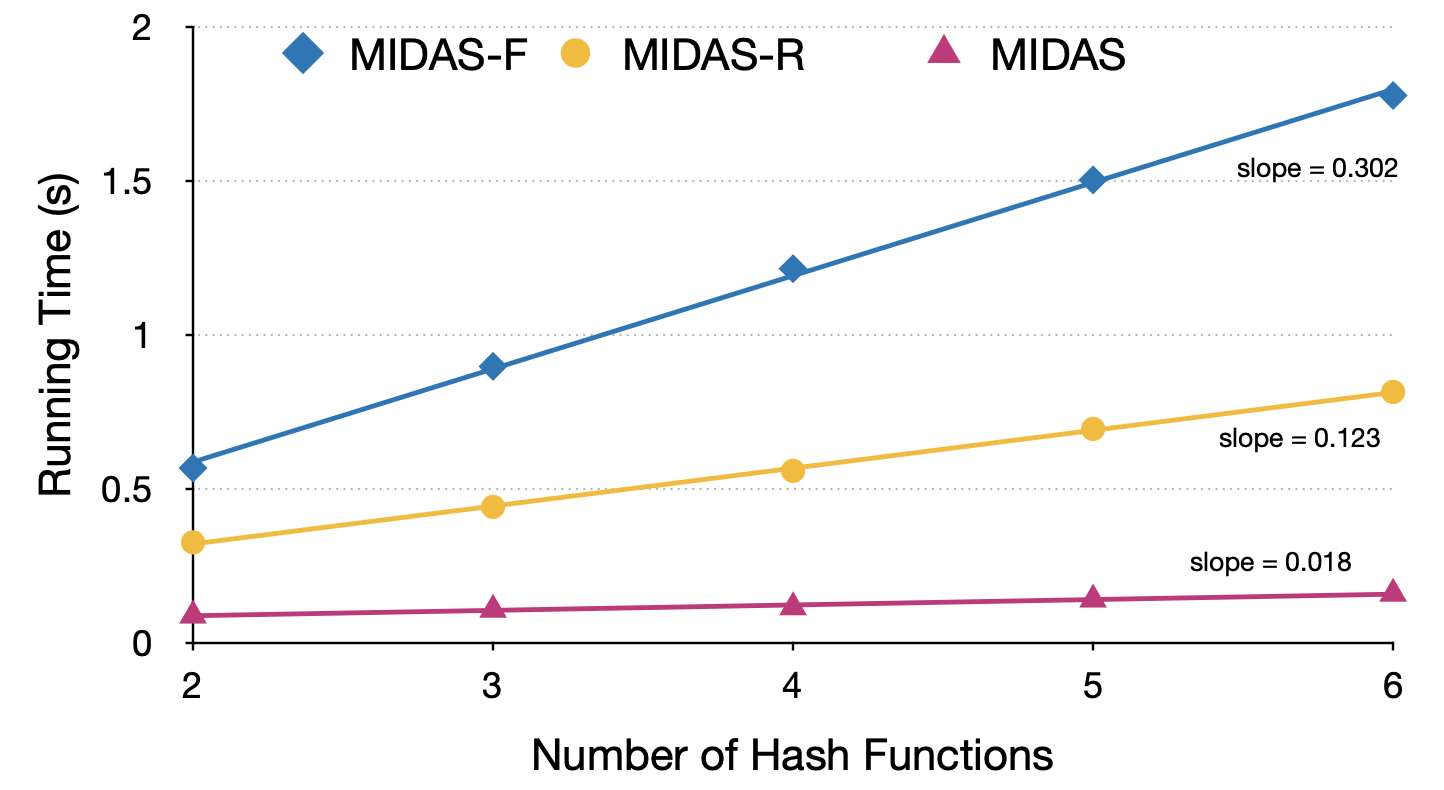}}
		\caption{\label{fig:hashfunctions} \method{}, \method-R and \method-F scale linearly with the number of hash functions.}
	\end{figure}

	Figure~\ref{fig:buckets} shows the dependence of the running time on the number of buckets. In general, the time increases with the number of buckets, but \method-F is more sensitive to the number of buckets. This is because \method-F requires updating the CMS data structure, which, due to the nested selection operation, cannot be vectorized. On the other hand, in \method{} and \method-R, the clearing and $\alpha$ reducing operations can be efficiently vectorized.

	\begin{figure}[!htb]
		\center{\includegraphics[width=0.65\columnwidth]{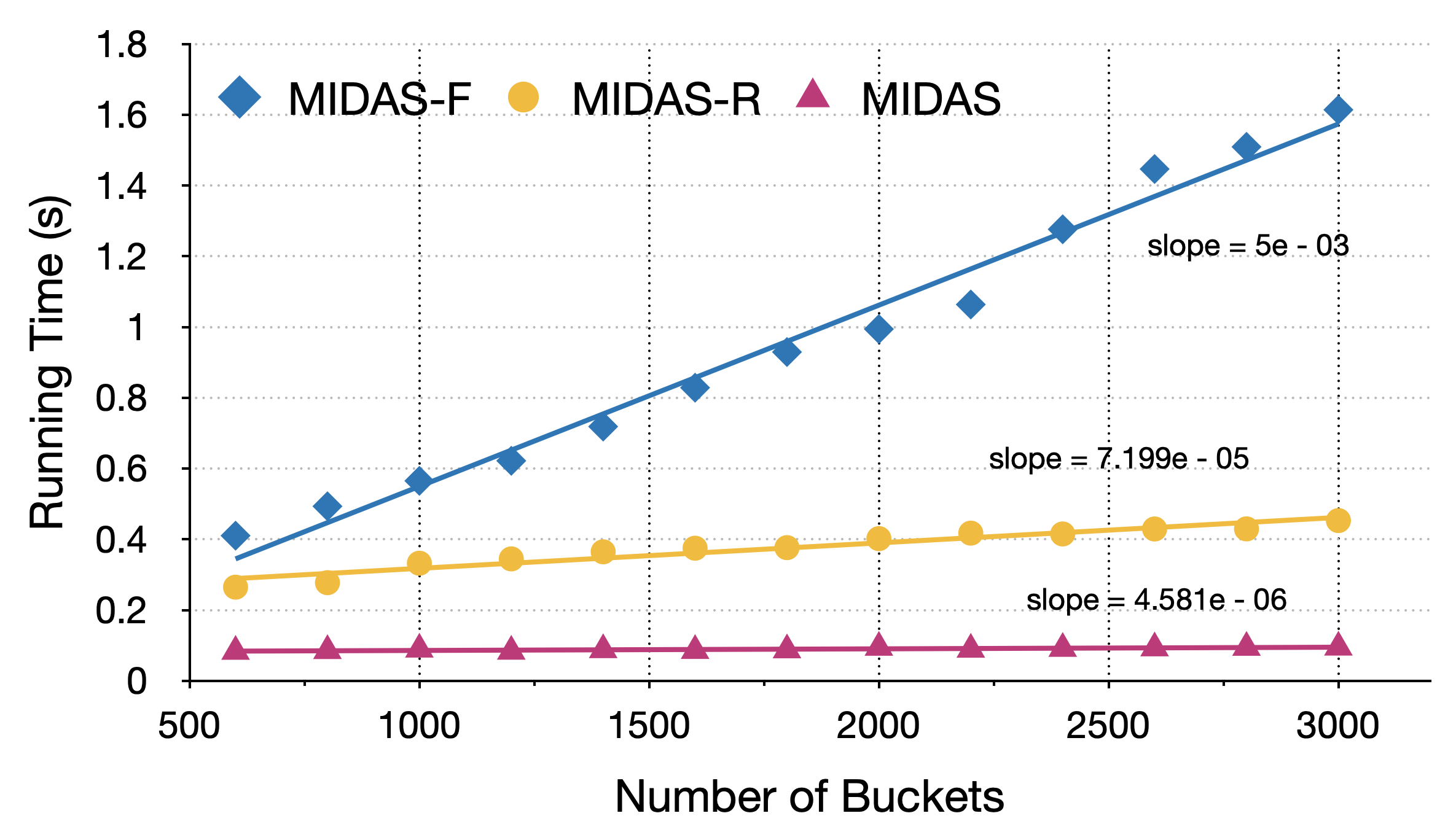}}
		\caption{\label{fig:buckets} \method{}, \method-R and \method-F scale linearly with the number of buckets.}
	\end{figure}

	\subsection{Real-World Effectiveness}

	We measure anomaly scores using \method, \method-R, \method-F, \sedanspot, PENminer, and F-FADE on the \emph{TwitterSecurity} dataset.
	Figure~\ref{fig:security} plots the normalized anomaly scores vs. day (during the four months of 2014).
	We aggregate edges for each day by taking the highest anomaly score.
	Anomalies correspond to major world news such as the Mpeketoni attack (event 6) or the Soma Mine explosion (event 1).

	\begin{figure}[!htb]
		\centering
		\includegraphics[width=0.5\columnwidth]{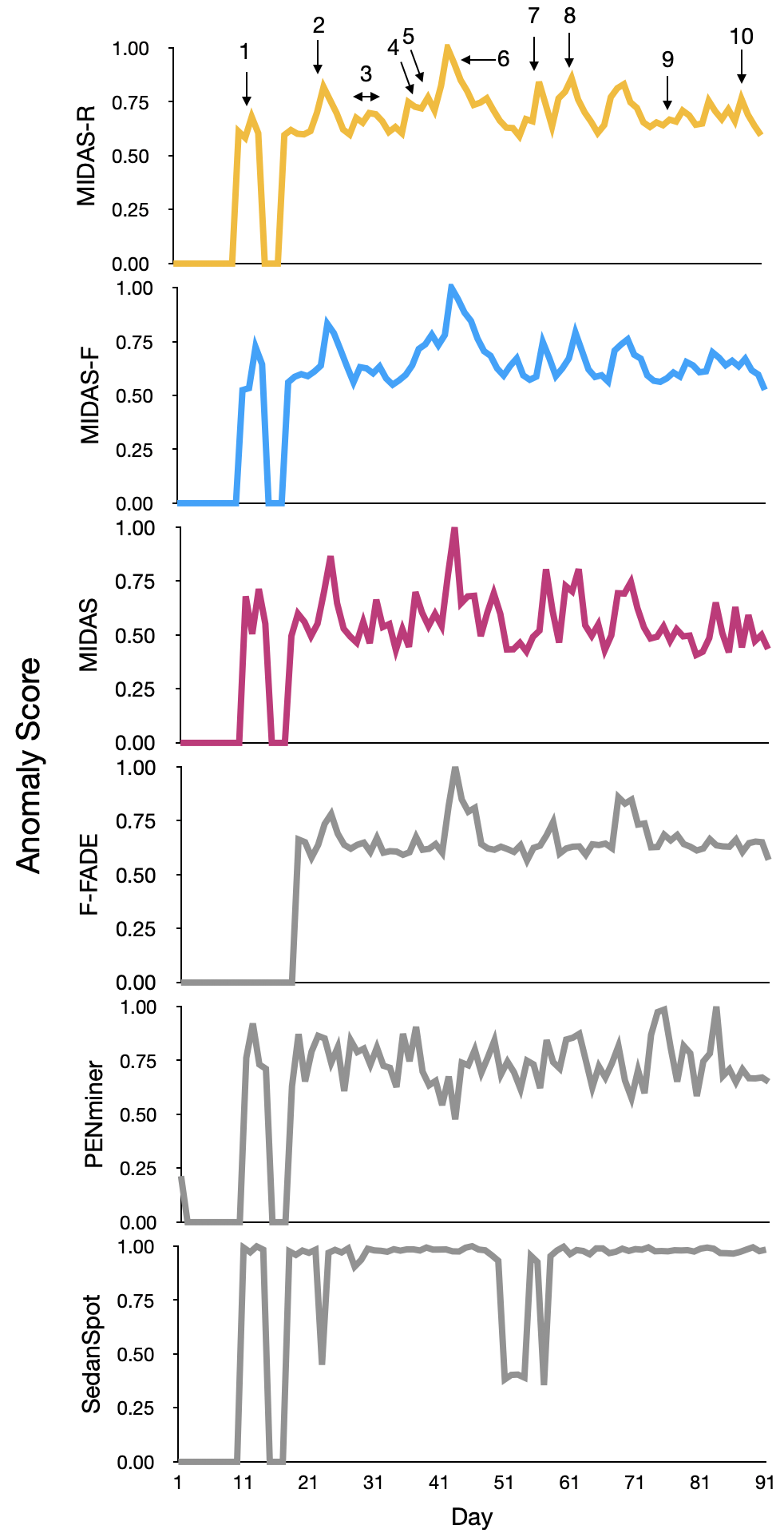}
		\caption{Anomalies detected by \method, \method-R and \method-F correspond to major security-related events in \emph{TwitterSecurity}}\label{fig:security}
	\end{figure}

	\sedanspot\ gives relatively high scores for all days making it difficult to spot anomalies (events). F-FADE produces the highest score near event 6 and peaks at events 2 and 8. However, for other days, scores are maintained around a static level, which provides no useful information in detecting rest events.
	Also note that as F-FADE requires initial learning, thus there are no scores around event 1.
	PENminer's scores keep fluctuating during the four months.
	It would be hard to learn anomalies from the produced scores.
	For \method\ and its variants, we can see four apparent peaks near major events like 2, 6, 7, 8, and at events 1 and 10, small peaks are also noticeable, though less obvious.
	Hence, we can see our proposed algorithm can extract out more anomalous events from real-world social networks compared with baselines.

	The anomalies detected by \method, \method-R and \method-F coincide with the ground events in the \emph{TwitterSecurity} timeline as follows:

	\begin{enumerate}
		\footnotesize
		\item 13-05-2014. Turkey Mine Accident, Hundreds Dead.
		\item 24-05-2014. Raid.
		\item 30-05-2014. Attack/Ambush.\\ 03-06-2014. Suicide bombing.
		\item 09-06-2014. Suicide/Truck bombings.
		\item 10-06-2014. Iraqi Militants Seized Large Regions.\\ 11-06-2014. Kidnapping.
		\item 15-06-2014. Attack.
		\item 26-06-2014. Suicide Bombing/Shootout/Raid.
		\item 03-07-2014. Israel Conflicts with Hamas in Gaza.
		\item 18-07-2014. Airplane with 298 Onboard was Shot Down over Ukraine.
		\item 30-07-2014. Ebola Virus Outbreak.
	\end{enumerate}

	\textbf{Microcluster anomalies:} Figure \ref{fig:micro} corresponds to Event $7$ in the \emph{TwitterSecurity} dataset. Single edges in the plot denote $444$ actual edges, while double edges in the plot denote $888$ actual edges between the nodes. This suddenly arriving (within $1$ day) group of suspiciously similar edges is an example of a microcluster anomaly which \method, \method-R\ and \method-F detect, but \sedanspot{} misses.

	\begin{figure}[H]
		\centering
		\includegraphics[width=0.5\columnwidth]{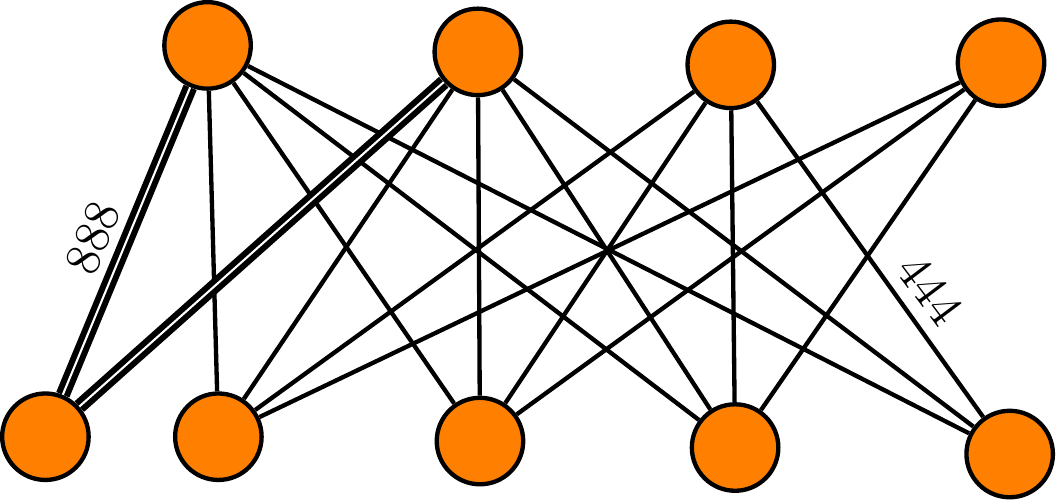}
		\caption{Microcluster Anomaly in \emph{TwitterSecurity}}\label{fig:micro}
	\end{figure}

	\section{Conclusion}
	In this paper, we proposed \method, \method-R, and \method-F for microcluster based detection of anomalies in edge streams. Future work could consider more general types of data, including heterogeneous graphs or tensors.
	Our contributions are as follows:
	\begin{enumerate}
		\item Streaming Microcluster Detection: We propose a novel streaming approach combining statistical (chi-squared test) and algorithmic (count-min sketch) ideas to detect microcluster anomalies, requiring constant time and memory.
		\item Theoretical Guarantees: In Theorem \ref{thm:bound}, we show guarantees on the false positive probability of \method.
		\item Effectiveness: Our experimental results show that \method{} outperforms baseline approaches by up to $62$\% higher ROC-AUC, and processes the data orders-of-magnitude faster than baseline approaches.
		\item Filtering Anomalies: We propose a variant, \method-F, that introduces two modifications that aim to filter away anomalous edges to prevent them from negatively affecting the algorithm's internal data structures.
	\end{enumerate}

	\bibliographystyle{ACM-Reference-Format}
	\bibliography{main}

\end{document}